\theoremstyle{plain}
\newtheorem{theorem}{Theorem}
\newtheorem{proposition}{Proposition}
\theoremstyle{definition}
\newtheorem{definition}{Definition}
\newtheorem{example}{Example}
\theoremstyle{remark}
\newtheorem{claim}{Claim}
\newcommand{\textquoted}[1]{\textquotedblleft #1\textquotedblright{}}
\newcommand{\rytodo}[1]{\todo[color=cyan]{Roland: #1}}
\newcommand{\Floor}[1]{\lfloor{#1}\rfloor}
\newcommand{\Ceil}[1]{\lceil{#1}\rceil}
\newcommand{\Angle}[1]{\langle #1 \rangle}
\newcommand{\NOT}{\neg}
\newcommand{\AND}{\wedge}
\newcommand{\OR}{\vee}
\newcommand{\EQU}{\leftrightarrow}
\newcommand{\D}{\mathcal{D}}
\newcommand{\OBDDC}[2]{OBDD$[\AND_{#1}]_{#2}$}
\newcommand{\CDD}{CDD}
\newcommand{\CCDD}{CCDD}
\newcommand{\NNF}{\ensuremath{\mathsf{NNF}}}
\newcommand{\DNNF}{\ensuremath{\mathsf{DNNF}}}
\newcommand{\dDNNF}{\ensuremath{\mathsf{d\text{-}DNNF}}}
\newcommand{\DecDNNF}{Decision-\ensuremath{\mathsf{DNNF}}}
\newcommand{\CNF}{\ensuremath{\mathsf{CNF}}}
\newcommand{\LOBDD}[1]{\ensuremath{\mathsf{OBDD_{#1}}}}
\newcommand{\LSDD}[1]{\ensuremath{\mathsf{SDD_{#1}}}}
\newcommand{\LOBDDC}[2]{\ensuremath{\mathsf{OBDD[\AND_{#1}]_{#2}}}}
\newcommand{\Rzero}{\ensuremath{\mathsf{CDD}}}
\newcommand{\Ctwo}{\ensuremath{\mathsf{CCDD}}}
\newcommand{\CtwoK}[1]{\ensuremath{\mathsf{CCDD}[k_{#1}]}}
\newcommand{\satisfying}[1]{\ensuremath{\mathsf{sol}(#1)}}
\newcommand{\dfour}{D4}
\newcommand{\PreLite}{\ensuremath{\mathsf{PreLite}}}
\newcommand{\Panini}{\ensuremath{\mathsf{Panini}}}
\newcommand{\ExactMC}{\ensuremath{\mathsf{ExactMC}}}
\newcommand{\ExactSamp}{\ensuremath{\mathsf{ExactUS}}}
\newcommand{\ConstructCore}{\textsc{ConstructCore}}
\newcommand{\DetectLitEqu}{\textsc{DetectLitEqu}}
\newcommand{\Decompose}{\textsc{Decompose}}
\newcommand{\FuncPickGoodVar}{\textsc{PickGoodVar}}
\newcommand{\ReduceKDepth}{\textsc{RedKDepth}}
\newcommand{\DecodeLitEqu}{\text{DecodeLitEqu}}
\newcommand{\Kernelizable}{\textsc{ShouldKernelize}}
\newcommand{\EqLits}{\ensuremath{\mathsf{eqLits}}}
\newcommand{\SearchCounter}{\ensuremath{\mathsf{SearchCounter}}}
\newcommand{\Cache}{\ensuremath{\mathit{Cache}}}
\newcommand{\oCT}{\ensuremath{\mathit{CT}}}
\newcommand{\CO}{\ensuremath{\mathbf{CO}}}
\newcommand{\VA}{\ensuremath{\mathbf{VA}}}
\newcommand{\CE}{\ensuremath{\mathbf{CE}}}
\newcommand{\IM}{\ensuremath{\mathbf{IM}}}
\newcommand{\EQ}{\ensuremath{\mathbf{EQ}}}
\newcommand{\SE}{\ensuremath{\mathbf{SE}}}
\newcommand{\CT}{\ensuremath{\mathbf{CT}}}
\newcommand{\ME}{\ensuremath{\mathbf{ME}}}
\newcommand{\CD}{\ensuremath{\mathbf{CD}}}
\newcommand{\SFO}{\ensuremath{\mathbf{SFO}}}
\newcommand{\FO}{\ensuremath{\mathbf{FO}}}
\newcommand{\ABC}{\ensuremath{\mathbf{\wedge}\mathbf{BC}}}
\newcommand{\AC}{\ensuremath{\mathbf{\wedge}\mathbf{C}}}
\newcommand{\OBC}{\ensuremath{\mathbf{\vee}\mathbf{BC}}}
\newcommand{\OC}{\ensuremath{\mathbf{\vee}\mathbf{C}}}
\newcommand{\NC}{\ensuremath{\mathbf{\neg}\mathbf{C}}}
\begin{document}

\begin{frontmatter}



\title{\Ctwo{}: A Tractable Representation for Model Counting and Uniform Sampling\tnoteref{t1}}
\tnotetext[t1]{This is an extended version of the paper entitled ``The Power of Literal Equivalence in Model Counting'' published in the proceedings of AAAI-21 (3851--3859). The author list has been sorted alphabetically by last name; this should not be used to determine the extent of authors' contributions.}



\author[1]{Yong Lai\corref{cor1}}
\author[2]{Kuldeep S. Meel}
\author[2]{Roland H. C. Yap}


\address[1]{Key Laboratory of Symbolic Computation and Knowledge Engineering of Ministry of Education, Jilin University, Changchun, 130012, China}


\address[2]{School of Computing, National University of Singapore, 119077, Singapore}


\cortext[cor1]{Corresponding author}

\begin{abstract}
Knowledge compilation concerns with the compilation of representation languages to target languages supporting a wide range of tractable operations arising from diverse areas of computer science. 
Tractable target compilation languages are usually achieved by restrictions on the internal nodes ($\land$ or $\lor$) of the \NNF{}. 
In this paper, we propose a new representation language \Ctwo{}, which introduces new restrictions on conjunction nodes to capture equivalent literals. We show that \Ctwo{} supports two key queries, model counting and uniform samping, in polytime. 
We present algorithms and a compiler to compile propositional formulas expressed in \CNF{} into \Ctwo{}. 
Experiments over a large set of benchmarks show that
our compilation times are better with smaller representations than
state-of-art \DecDNNF{}, \LSDD{} and \LOBDDC{}{} compilers. 
We apply our techniques to model counting and uniform sampling, and develop model counter and uniform sampler on \CNF{}.
Our empirical evaluation demonstrates the following significant improvements: our model counter can solve 885 instances while the prior state of the art solved only 843 instances, representing an improvement of 43 instances; and our uniform sampler can solve 780 instances while the prior state of the art solved only 648 instances, representing an improvement of 132 instances. 
\end{abstract}

\begin{keyword}
knowldge compilation \sep model counting \sep uniform sampling


\end{keyword}

\end{frontmatter}


\section{Introduction}
\label{sec:intro}

Propositional reasoning plays a key role in diverse areas ranging from artificial intelligence, computational biology, verification, and the like. The computational intractability of the basic queries such as satisfiability, clausal entailment, and model counting for propositional reasoning provided an impetus to the emergence of the knowledge compilation (KC) approach \cite{Selman:Kautz:96,Darwiche:Marquis:02,Cadoli:Donini:97}. Knowledge compilation concerns with the compilation of propositional theory into target languages that support a wide range of queries including satisfiability, model counting, uniform sampling in polynomial time. Accordingly, KC-based techniques form the core of several inference techniques in the context of probabilistic databases \cite{VdB:Suciu:17}, probabilistic programming \cite{Fierens:etal:15}, tractable learning \cite{Kisa:etal:14}, and for synthesis and verification of hardware and software systems \cite{Fried:etal:16,book:Clarke:etal:00}.

A target language is measured across three dimensions~\cite{Darwiche:Marquis:02,Huang:Darwiche:07,Marquis:15}: (1) succinctness of the target language; (2) supported operations in polytime by the target language; 
and (3) runtime efficiency of compilation process from representation to target language. 
The design of target compilation languages typically focuses on propositional formulas in negation normal form where the internal nodes are either conjunction $(\wedge)$ or disjunction $(\vee)$, and the leaf nodes are $\top$ ($true$), $\bot$ ($\mathit{false}$), $x$, $\neg x$ for variable $x$. To achieve tractability, we often put restrictions on the internal nodes with respect to their children. Two of the most widely used restrictions to achieve tractability are decomposability and determinism \cite{Darwiche:01a,Darwiche:01b}.


Due to the ubiquity of \CNF{} as representation language, we are often interested in compilation methods from \CNF{} to the desired 
target compilation language. 
The restrictions to achieve tractability are designed while keeping the the runtime complexity of the compilation  in consideration. 
In practice, we often use {\em decision} nodes to enforce determinism. In contrast, the decomposability can be enforced by a simple clustering of CNF clauses such that clauses in distinct clusters do not share variables, and thereafter a conjunction node with children corresponding to each of the clusters can be constructed. Given the intractability of satisfiability on \CNF{}, syntactic structure-based restrictions ensure the creation of a node can be achieved in polynomial time; the need for exponentially many nodes for most interesting target languages still leads to exponential time compilation algorithms.

While the KC map studies a diverse set of operations and properties, we focus our attention on  model counting (CT) 
and uniform sampling (US) queries owing to their widespread usage in diverse areas ranging from probabilistic inference, reliability of networks, to hardware and software model checking, etc. 
\DecDNNF{} \cite{Oztok:Darwiche:14}, an influential target language, has been shown to support tractable model counting and uniform sampling. 
Actually, it was observed by Huang and Darwiche~\cite{Huang:Darwiche:07} that the trace of a search-based exact model counter corresponds to \DecDNNF{}.
Furthermore, Sharma et al. \cite{KUS} showed that a scalable uniform sampler was engineered based on the scalable knowledge compiler \dfour{} \cite{D4} on \DecDNNF{}.
The starting point of our work is to investigate the following natural question: {\em Can we design efficient techniques on model counting and uniform sampling based on a generalization of \DecDNNF{}}?

The primary contribution of this paper is an affirmative answer to the above question. 
As a first step, we observe that the widely employed restrictions, in the context of knowledge compilation, on the internal nodes, decomposability, and determinism, are not expressive enough to capture literal equivalences.
Indeed, pre-/in-processing techniques are an important step in modern SAT solvers~\cite{book-chapter:Marques-Silva:etal:09}.
We then first propose a generalization of \DecDNNF{}, called {\Ctwo}, to capture literal equivalence, and show that {\Ctwo} supports model counting and uniform sampling in polynomial time. 
Guided by our motivation, we now design a knowledge compiler, called \Panini{}, to compile \CNF{} formulas into {\Ctwo}, and apply it to model counting and uniform sampling. 

To empirically measure the effectiveness of {\Ctwo}, we perform an extensive experimental evaluation over a comprehensive set of benchmarks and conduct performance comparison of our tools vis-a-vis the state of the art knowledge compilers, model counters, and uniform samplers, c2d~\cite{c2d}, Dsharp~\cite{Dsharp}, miniC2D~\cite{Oztok:Darwiche:15}, BDDC~\cite{Lai:etal:17}, D4~\cite{D4}, ADDMC \cite{ADDMC}, Ganak \cite{SRSM19}, SPUR \cite{SPUR}, and KUS~\cite{KUS}. 
Our empirical evaluation over a large set of benchmarks show that
our compilation times are better with smaller representations than
state-of-art \DecDNNF{}, \LSDD{}, and \LOBDDC{}{} compilers. 
Among the prior state of the art model counters and uniform samplers are 843 (Ganak) and 648 (SPUR), our counter and sampler solve 886 and 780, representing a significant improvement of 43 and 132 instances, respectively. 
Since the developments in KC techniques have demonstrated the significance of engineering improvements, we believe that the significant performance improvements of our tools open up directions of future research in the improvement of decision heuristics, caching schemes, and the like for compilers, counters, samplers based on {\Ctwo}.

The rest of the paper is organized as follows.  We present notations, preliminaries, and related work in Sections \ref{sec:prelims}--\ref{sec:related}. 
We introduce {\Ctwo} in Section~\ref{sec:CDD} to capture literal equivalence, and tractable algorithms for model counting and uniform sampling in Section~\ref{sec:CT-US}. 
In Section~\ref{sec:tools}, we present our tools for knowledge compilation, model counting, and uniform sampling. 
Next, we present detailed empirical evaluation in Section~\ref{sec:experiments}. 
Finally, we discuss the other tractable operations on {\Ctwo} in Section~\ref{sec:tract} and conclude in Section~\ref{sec:conclusion}.

\section{Notations and Background}\label{sec:prelims}

In a formula or the representations discussed,
$x$ denotes a propositional variable, and
literal $l$ is a variable $x$ or its negation $\neg x$, where $var(l)$
denotes the variable.
$\mathit{PV} = \{x_0, x_1, \ldots, x_n, \ldots\}$ denotes a
set of propositional variables. 
A formula is constructed from constants $\mathit{true}$, $\mathit{false}$ and propositional variables using negation operator $\lnot$, conjunction operator $\land$, disjunction operator $
\lor$, and equality operator $\EQU$.
A clause $C$ (resp. term $T$) is a set of literals representing their disjunction (resp. conjunction).
A formula in conjunctive normal form (\CNF{}) is a set of clauses representing their conjunction.  
Given a formula $\varphi$, a variable $x$, and a constant $b$, a substitution $\varphi[x \mapsto b]$ is a transformed formula by replacing $x$ by $b$ in $\varphi$.
An assignment $\omega$ over a variable set $X$ is a mapping from $X$ to $\{true, \mathit{false}\}$. 
Given a literal $l$, we denote $\omega(l)$ by $\{var(l) = \mathit{true}\}$ if $l$ is positive and $\{var(l) = \mathit{false}\}$ otherwise.
The set of all assignments over $X$ is denoted by $2^X$. 
A model of $\varphi$ is an assignment over $\mathit{Vars}(\varphi)$ that satisfies $\varphi$; that is, the substitution of $\varphi$ on the model equals to $\mathit{true}$.  Let $\satisfying{\varphi} \subseteq 2^{X}$ represent the set of models of $\varphi$, and $\varphi \models \psi$ iff $\satisfying{\varphi} \subseteq \satisfying{\psi}$. Given a formula $\varphi$, the problem of model counting is to compute $|\satisfying{\varphi}|$, and the problem of uniform sampling is to generate a random model in $\satisfying{\varphi}$ with the same probability $\frac{1}{|\satisfying{\varphi}|}$.

We focus on subsets of Negation Normal Form (\NNF{}) where the internal nodes are labeled with disjunction ($\vee$) or conjunction ($\wedge$) while the leaf nodes are labeled with  $\bot$ ($false$), $\top$ ($true$), 
or a literal.
For a node $v$, let $\vartheta(v)$ and $Vars(v)$ denote the formula represented by the DAG rooted at $v$, and the variables that label the descendants of $v$, respectively.  

We define the well-known decomposed conjunction~\cite{Darwiche:Marquis:02}
as follows:
\begin{definition}
	A conjunction node $v$ is called a \emph{decomposed conjunction} if its children (also known as conjuncts of $v$) do not share variables. Formally, let $w_1, \ldots, w_k$ be the children of $\AND$-node $v$, then $\mathit{Vars}(w_i) \cap \mathit{Vars}(w_j) = \emptyset $ for $i \ne j$. 
\end{definition}

If each conjunction node is decomposed, we say the formula is in \emph{Decomposable} \NNF{} (\DNNF{})~\cite{Darwiche:01a}.

\begin{definition}
	A disjunction node $v$ is called \emph{deterministic} if each two disjuncts of $v$ are logically contradictory. That is, if $w_1$, \ldots, $w_n$ are
	the children of $\OR$-node $v$, then $\vartheta(w_i) \AND \vartheta(w_j) \models false$ for $i \ne j$. 
\end{definition}

If each disjunction node of a \DNNF{} formula is deterministic, we say the formula is in deterministic \DNNF{} (\dDNNF{}), and we can perform tractable model counting on it.

\emph{Binary decision} is a practical property to impose determinism in the design of a compiler (see e.g., \dfour{} \cite{D4}), and the resulting language is called \DecDNNF{} \cite{Oztok:Darwiche:14}. 
Essentially, each decision node with one variable $x$ and two children is equivalent to a disjunction node of the form $(\NOT x \AND \varphi) \OR (x \AND \psi)$, where $\varphi$, $\psi$ represent the formulas corresponding to the children.
If each node of an \NNF{} formula is labeled with $\bot$ or $\top$, or represents a binary decision, the formula is called a Binary Decision Diagram (BDD).

Given a linear ordering $\prec$, a BDD is called ordered (OBDD) \cite{Bryant:86} if each decision node $u$ with variable $x_i$ and its decision descendant $v$ with variable $x_j$ satisfy $x_i \prec x_j$. 
Darwiche \cite{Darwiche:11} generalized binary decision to sentential decision, and proposed the sentential decision diagram (SDD). 
Lai at al. \cite{Lai:etal:17} augmented OBDD with decomposed conjunction giving \OBDDC{}{}.
Hereafter, we will use \LOBDD{}, \LSDD{} and \LOBDDC{}{} to denote the sets of all OBDDs, SDDs and \OBDDC{}{}s, respectively.

\section{Related Work}
\label{sec:related}

\subsection{Knowledge Compilation}

In the context of knowledge compilation, a diverse set of operations and properties have been studied with respect to the KC map \cite{Darwiche:Marquis:02}. However, we focus our attention on  model counting (CT) and uniform sampling (US) queries owing to their widespread usage in diverse areas ranging from probabilistic inference, reliability of networks, to hardware and software model checking, etc. 

To the best of our knowledge, \DecDNNF{} is the first KC language which has been shown to support both tractable CT and US \cite{Darwiche:01b,KUS}. Natually, the subsets of \DecDNNF{}, e.g., \LOBDD{}, \LSDD{}, and \LOBDDC{}{}, also support both tractable CT and US.
This paper generalizes \DecDNNF{} to propose a new representation \Ctwo{} that also supports both tractable CT and US.
On the other hand, there are some languages, e.g., Sym-\ensuremath{\mathsf{DDG}} \cite{Bart:etal:14} and \ensuremath{\mathsf{EADT}} \cite{Koriche:etal:13}, which support tractable CT but is unknown to support tractable US or not.
For KC tools, there are many practical knowledge compilers so far, including c2d \cite{c2d}, Dsharp \cite{Dsharp}, miniC2D \cite{Oztok:Darwiche:15}, BDDC \cite{Lai:etal:17}, and D4 \cite{D4}.
We remark that many BDD packages (e.g., CUDD \cite{CUDD} and BuDDy \cite{BuDDy}) and the SDD package \cite{SDD-package} also equip the operations to transform a CNF formula into the corresponding KC languages.
In addition, some of knowledge compilers, e.g., c2d \cite{c2d}, Dsharp \cite{Dsharp}, miniC2D \cite{Oztok:Darwiche:15}, and D4 \cite{D4}, also implement the interface for CT and therefore can serve as scalable model counters.

\subsection{Model Counting}

In this paper, we focus on the design of search-based model counters. To this end, we first present the skeleton of a general search-based model counter in Algorithm~\ref{alg:Search}.\footnote{To improve readability, we slightly modified the fashion of calculating the current count to be consistent with our \ExactMC{} algorithm. $X$ is the set of variables in the original formula.} The algorithms often maintain a cache that stores the residual sub-formulas along with their corresponding model counts. The component-based decomposition, represented in line~\ref{line:alg-search-decompose}, seeks to partition the $\varphi$ into sub-formulas, referred to as components, such that each of the components is defined over a mutually disjoint set of variables. Else, we pick a variable in line \ref{line:alg-search-decision-begin} and recursively compute the exact model count. Huang and Darwiche observed that the trace of the execution of such a model counter could be viewed to correspond to a {\DecDNNF} formula. In this context, it is worth emphasizing that {\DecDNNF} supports linear time model counting, which is reflected in simple constant time computations in lines~\ref{line:alg-search-decompose-count} and~\ref{line:alg-search-decision-count} during each step of the recursions wherein every step of the recursion would correspond to a node in {\DecDNNF} capturing the trace of the execution of {\SearchCounter}. 
In this paper, we implemented a new model counter called \ExactMC{} based on a generalized framework of {\SearchCounter}. 

\begin{algorithm}[tb]
	\caption{{\SearchCounter}($\varphi$)} \label{alg:Search}
	\lIf {$\varphi = \mathit{false}$} {\KwRet 0} \label{line:alg-search-base-case-begin}
	\lIf {$\varphi = \mathit{true}$} {\KwRet $2^{|X|}$} \label{line:alg-search-base-case-end}
	\lIf {$\Cache(\varphi) \not= nil$} {\KwRet $\Cache(\varphi)$}
	$\Psi \leftarrow \Decompose(\varphi)$\; \label{line:alg-search-decompose}
	\uIf {$|\Psi| > 1$} {
		$c \leftarrow \prod_{\psi \in \Psi}{\SearchCounter}(\psi)$\;
		\KwRet $\Cache(\varphi) \leftarrow \frac{c}{2^{(|\Psi| - 1) \cdot |X|}}$ \label{line:alg-search-decompose-count}
	}
	\Else{ 
		$x \leftarrow \FuncPickGoodVar(\varphi)$\; \label{line:alg-search-decision-begin}
		$c_0 \leftarrow \SearchCounter(\varphi[x \mapsto false] )$\;
		$c_1 \leftarrow \SearchCounter(\varphi[x \mapsto true] )$\; 
		
		\KwRet $\mathit{Cache}(\varphi) \leftarrow \frac{c_0 + c_1}{2}$\;\label{line:alg-search-decision-count}
		
	}
\end{algorithm}

\paragraph{Remark on Approximate Model Counting}

While this work focuses on exact model counting, it is worth remarking that there has been a 
long line of work in the design of efficient hashing-based approximate model counters that seek to provide 
$(\varepsilon,\delta)$-guarantees~\cite{Stockmeyer:83,Gomes:etal:06,ApproxMC,ApproxMC2,ApproxMC3,ApproxMC4}. 

\subsection{Uniform Sampling}

Uniform sampling is closely related to model counting and knowledge compilation. 
Recently, Achlioptas et al. \cite{SPUR} brought together model counting and reservoir sampling to develop a uniform sampler called SPUR on top of sharpSAT. 
Subsequently, Sharma et al. \cite{KUS} showed that \DecDNNF{} supports tractable uniform sampling and proposed a uniform sampler called KUS using D4.
Furthermore, this paper shows that \Ctwo{} also supports tractable US,  and a scalable uniform sampler called \ExactSamp{} is developed based on a more efficient knowledge compiler \Panini{} than \dfour{}.

While this work focuses on uniform sampling, there are also many samplers that seek to achieve scalability at the cost of theoretical guarantees of uniformity.
Chakraborty et al. \cite{UniGen} introduced the first practical almost-uniform sampler, UniGen, which has been improved
to UniGen3 \cite{ApproxMC4}.
Golia et al. \cite{CMSgen} designed a sampler called CMSGen by modifying the existing state-of-the-art Conflict-Driven Clause Learning
(CDCL) SAT solver CryptoMiniSat \cite{CryptoMiniSat}. 
Although no theoretical guarantee has been provided, CMSGen performs very well in practice.

\section{Capturing Literal Equivalences by \Ctwo{}}
\label{sec:CDD}

To seek an answer to the natural question of designing a counter whose trace is a generalization of \DecDNNF{}, we first investigate appropriate generalizations of \DecDNNF{}. To this end, 
we turn to the literal equivalences, a powerful technique in SAT solving, and we design a new representation language that seeks to utilize literal equivalences. We first discuss how to capture literal equivalence from the knowledge compilation perspective, which is then manifested into a corresponding new tractable language, called {\Ctwo}. We finally show that {\Ctwo} supports linear model counting, which serves as motivation for us to design a counter whose trace corresponds to {\Ctwo}. 

\subsection{Capturing Literal Equivalences}
\label{sec:CDD:lit-equ}
Given two literals $l$ and $l'$, we use $l \EQU l'$ to denote literal equivalence of $l$ and $l'$.  
Given a set of literal equivalences $E$, let $E' = \{l \EQU l', \NOT l \EQU \NOT l'\mid l \EQU l' \in E\}$; and then we define semantic closure of $E$,  denoted by $\Ceil{E}$, as equivalence closure of $E'$. Now for every literal $l$ under $\Ceil{E}$, let $[l]$ denote the equivalence class of $l$. Given $E$, a unique equivalent representation of $E$, denoted by $\Floor{E}$ and called {\em prime literal equivalences}, is defined as follows: \\ 
\centerline{
	$\displaystyle
	\Floor{E} =   \bigcup\limits_{x \in \mathit{PV}, \min_\prec[x] = x}^{} \{ x \EQU l \mid l \in [x], l \neq x \} $
} \\
where $\min_\prec[x]$ is the minimum variable appearing in $[x]$ over the lexicographic order $\prec$. 
It can be shown that $\Ceil{E} = \Ceil{\Floor{E}}$.  

Let $\varphi$ be a formula and let $E$ be a set of prime literal equivalences implied by $\varphi$. 
We can obtain another formula $\varphi'$ by performing a \emph{literal-substitution}: replace each $l$ (resp. $\NOT l$) in $\varphi$ with $x$ (resp. $\NOT x$)  for each $x \EQU l \in E$. Note that, $\varphi \equiv \varphi' \AND \bigwedge_{x \EQU l \in E} x \EQU l$. 

\begin{example}
	Given $E = \{\neg x_1 \EQU  x_3, \NOT x_4 \EQU x_3, \NOT x_2 \EQU \NOT x_6, x_5 \EQU x_5\}$, we have $\Floor{E} = \{x_1 \EQU \NOT x_3, x_1 \EQU x_4, x_2 \EQU x_6\}$. Given $\varphi = (x_1 \OR \neg x_3 \OR x_4 \OR x_7) \AND (x_1 \OR x_3 \OR x_5) \AND (\neg x_1 \EQU  x_3) \AND (\NOT x_4 \EQU x_3) \AND (\NOT x_2 \EQU \NOT x_6) \AND (x_5 \EQU x_5)$, each literal equivalence in $\Floor{E}$ is implied. We can use $\Floor{E}$ to perform a literal-substitution to simplify $\varphi$ as $ (x_1 \OR x_7) \AND \bigwedge\Floor{E}$.
\end{example}

We propose a new notion on conjunction nodes to represent literal equivalences: 

\begin{definition}
	A {\em kernelized conjunction node} $v$ is a conjunction node consisting of a
	distinguished child, we call the {\em core} child, denoted by $ch_{\mathit{core}}(v)$,  and a set of remaining children which define equivalences, denoted by $Ch_{rem}(v)$,  such that:
	
	\begin{enumerate}[topsep=0mm,parsep=1mm]
		\item Every $w_i \in Ch_{rem}(v)$ describes a literal equivalence, i.e., $w_i = \Angle{x \EQU l}$ and the union of $\vartheta(w_i)$, denoted by $E_v$, represents a set of prime literal equivalences.
		\item For each literal equivalence $x \EQU l \in E_v$, $var(l) \notin \mathit{Vars}(ch_{\mathit{core}}(v))$.
	\end{enumerate}
\end{definition}

We now show how the model count of a kernelization of formula is related
to its core.
For simplicity, we use a sightly more general definition for model in Propositions \ref{prop:kernelizedcount}--\ref{prop:counting}. Given a formula $\varphi$ and a set of variables $X \supseteq Vars(\varphi)$, a model of $\varphi$ over $X$ is an assignment over $X$ that satisfies $\varphi$. In practice, when we want to count models for $\varphi$, we only need to make $X = \mathit{Vars}(\varphi)$.

\begin{restatable}{proposition}{kernel}\label{prop:kernelizedcount}
	For a kernelized conjunction $v$ over $X$, if $\vartheta(ch_{\mathit{core}}(v))$ has $m$ models over $X$, then $\vartheta(v)$ has $\frac{m}{2^{|Ch_{rem}(v)|}}$ models over $X$.
\end{restatable}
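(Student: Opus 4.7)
The plan is to count the models of $\vartheta(v)$ over $X$ directly by relating them to the models of the core, and then show that each of the $k = |Ch_{rem}(v)|$ equivalences contributes a factor of $\tfrac{1}{2}$ to the model count.

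First, I would set up a partition of $X$ into three disjoint pieces: the core variables $X_c = \mathit{Vars}(ch_{\mathit{core}}(v))$; the set $Y = \{var(l) \mid x \EQU l \in E_v\}$ of right-hand-side variables appearing in the equivalences; and the remainder $Z = X \setminus (X_c \cup Y)$. The second clause of the definition of a kernelized conjunction gives $Y \cap X_c = \emptyset$, so this really is a disjoint union. I would next argue $|Y| = k$ using that $E_v$ is a set of prime literal equivalences: in $\Floor{E}$, each equivalence class with representative $x = \min_\prec[x]$ contributes one equivalence $x \EQU l$ per non-representative literal $l$ of the class, and the non-representative variables across distinct classes are disjoint. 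Hence the variables $var(l_i)$ across the $k$ remaining children are pairwise distinct.

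Now I would count directly. Let $m_0$ be the number of models of $\vartheta(ch_{\mathit{core}}(v))$ over $X_c$ alone; since variables in $X \setminus X_c$ are unconstrained by the core, $m = m_0 \cdot 2^{|X| - |X_c|}$. For $\vartheta(v)$, an assignment must also satisfy every equivalence $x_i \EQU l_i$. Each such equivalence functionally determines $y_i = var(l_i)$ from the value of the left-hand-side variable $x_i$, regardless of whether $x_i \in X_c$ (its value fixed by the core assignment) or $x_i \notin X_c$ (its value chosen freely in $Z$): setting $y_i = x_i$ or $y_i = \neg x_i$, depending on the polarity of $l_i$, is always possible and unique. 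Since the $y_i$ are pairwise distinct, disjoint from $X_c$, and never coincide with any left-hand-side variable $x_j$ (representatives and non-representatives form disjoint variable sets in $\Floor{E}$), the $k$ forced assignments on $Y$ are independent and simultaneously realizable.

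Putting this together, every model of $\vartheta(v)$ over $X$ arises uniquely as an assignment on $X_c \cup Z$ that restricts to a model of the core on $X_c$, extended to $Y$ by the forced values; hence the count is $m_0 \cdot 2^{|X| - |X_c| - |Y|} = m / 2^{|Y|} = m / 2^{k}$. The main obstacle I anticipate is justifying cleanly that the $k$ equivalences do not interact — i.e., that no two equivalences pin down the same $y_i$ to incompatible values, and that the determination of $y_i$ does not circle back to constrain the core. Both issues reduce to the structural observation that in $\Floor{E}$ each non-representative variable appears in exactly one equivalence and each variable plays only one role (representative or non-representative), which is exactly the content of the prime-form definition combined with the clause $var(l) \notin \mathit{Vars}(ch_{\mathit{core}}(v))$.
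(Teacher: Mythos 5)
Your proof is correct, but it takes a different route from the paper's. The paper proves the single-equivalence case first --- writing the kernelized conjunction as $\psi \wedge (x \EQU x')$ with $x' \notin \mathit{Vars}(\psi)$, observing that of the two extensions $\omega \cup \{x'=\mathit{false}\}$ and $\omega \cup \{x'=\mathit{true}\}$ of a model $\omega$ of $\psi$ exactly one satisfies the equivalence --- and then peels off the $k$ equivalences one at a time by induction, each step halving the count. You instead give a one-shot bijective count: partition $X$ into the core variables $X_c$, the right-hand-side variables $Y$, and the rest $Z$, and exhibit the models of $\vartheta(v)$ as exactly the free assignments on $X_c \cup Z$ restricting to core models, with $Y$ functionally determined. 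The two arguments rest on the same structural facts, but your version makes explicit something the paper's induction uses silently: when the induction peels off the last equivalence $x \EQU l$, the claim $var(l) \notin \mathit{Vars}(\psi)$ requires $var(l)$ to be absent not only from the core (which is clause~2 of the definition) but also from all the \emph{other} equivalence children, and that extra disjointness comes precisely from the prime form of $E_v$ --- each non-representative variable occurs in exactly one equivalence and no variable is both a representative and a non-representative. Your write-up surfaces and justifies this; the trade-off is that your argument is longer and leans more heavily on the combinatorics of $\Floor{E}$, whereas the paper's induction is shorter once one grants the freshness of each $var(l)$.
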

\begin{proof}
	
	Given each kernelized conjunction $\varphi \wedge (x_{i_1} \leftrightarrow l_{i_1}) \wedge \cdots \wedge (x_{i_m} \leftrightarrow l_{i_m})$, we can rewrite it as a recursive form
	$\Big[\big[[\varphi \wedge( x_{i_1} \leftrightarrow l_{i_1})] \wedge (\wedge x_{i_2} \leftrightarrow l_{i_2})\big]\wedge \cdots \Big] \wedge (x_{i_m} \leftrightarrow l_{i_m})$.
	Next we show given a kernelized conjunction $\varphi = \psi \wedge (x \leftrightarrow l)$ over $X$, if $\psi$ has $m$ models over $X$, then $\varphi$ has $\frac{m}{2}$ models over $X$. By induction, we get 
	Proposition \ref{prop:kernelizedcount}.
	Without loss of generality, assume $l = x'$. 
	As this is a kernalized conjuction, $x' \notin \mathit{Vars}(\psi)$. 
	Let $\omega \cup \{x' = \mathit{false}\}$ and $\omega \cup \{x' = \mathit{true}\}$ be two assignments over $X$, where $\omega$ is a model of $\psi$ over $X \setminus \{x'\}$. 
	Since $x \EQU x'$, exactly one of the two assignments can be a model of $\varphi$, so  half of the models of $\psi$ are the models of $\varphi$.
\end{proof}

\subsection{Defining \Ctwo{}}

We begin with the widely used idea of augmenting decision diagram with conjunction in knowledge compilation~\cite{Fargier:Marquis:06,Oztok:Darwiche:14,Bart:etal:14,Lai:etal:17}. 
This idea is restated in a general form, Conjunction \& Decision Diagram, to cover our kernelization-integrated languages:

\begin{definition}\label{def:CDD}
	A Conjunction \& Decision Diagram (\CDD) is a rooted DAG wherein each node $v$ is labeled with a symbol $sym(v)$.
	If $v$ is a leaf, $sym(v) = \bot$ or $\top$.
	Otherwise, $sym(v)$ is a variable ($v$ is called a \emph{decision node})
	or operator $\AND$ (called a \emph{conjunction node}).
	Each internal node $v$ has a set of children $Ch(v)$.
	For a decision node, $Ch(v) = \{lo(u), hi(u)\}$, where $lo(u)$ ($hi(u)$) is connected by a dashed (solid) edge.
	The formula represented by a \CDD{} rooted at $u$ is defined as follows:
	
	{
		\begin{equation}
		\vartheta(u) = \begin{cases}
		\mathit{false} & sym(u) = \bot \\
		\mathit{true} & sym(u) = \top \\
		\bigwedge_{v \in Ch(u)} \vartheta(v) & sym(u) = \AND \\
		\begin{gathered}\left[\NOT sym(u) \AND \vartheta(lo(u))\right] \OR 
		\left[sym(u) \AND \vartheta(hi(u))\right] 
		\end{gathered} & \text{otherwise}
		\end{cases}
		\end{equation}
	}
\end{definition}

Hereafter we denote a leaf node by $\Angle{\bot}$ or $\Angle{\top}$, an internal node by $\Angle{sym(v), Ch(v)}$; and a decision node is denoted by $\Angle{sym(v), lo(v), hi(v)}$ sometimes. 
Given a \CDD{} rooted at $v$ (denoted by $\D_v$), its size $|\D_v|$ is defined as the number of its edges, similar to other languages in the knowledge compilation literature. 
If we admit only read-once decisions and decomposed conjunctions, then the subset of \Rzero{} is \DecDNNF.
We are now ready to describe an extension of \DecDNNF{} that captures literal equivalence, by imposing a different constraint on conjunction: 
\begin{definition}[Constrained \CDD{}, CCDD{}]\label{def:RCDD}
	A {\CDD} is called \emph{constrained} if each decision node $u$ and its decision descendant $v$ satisfy $sym(u) \neq sym(v)$, and each conjunction node $v$ is either: (i) decomposed; or (ii) kernelized.
	The language of all constrained \CDD{}s is called \Ctwo{}.
\end{definition}

\begin{figure}[tb]
	\centering
	\includegraphics[width = 0.65\linewidth]{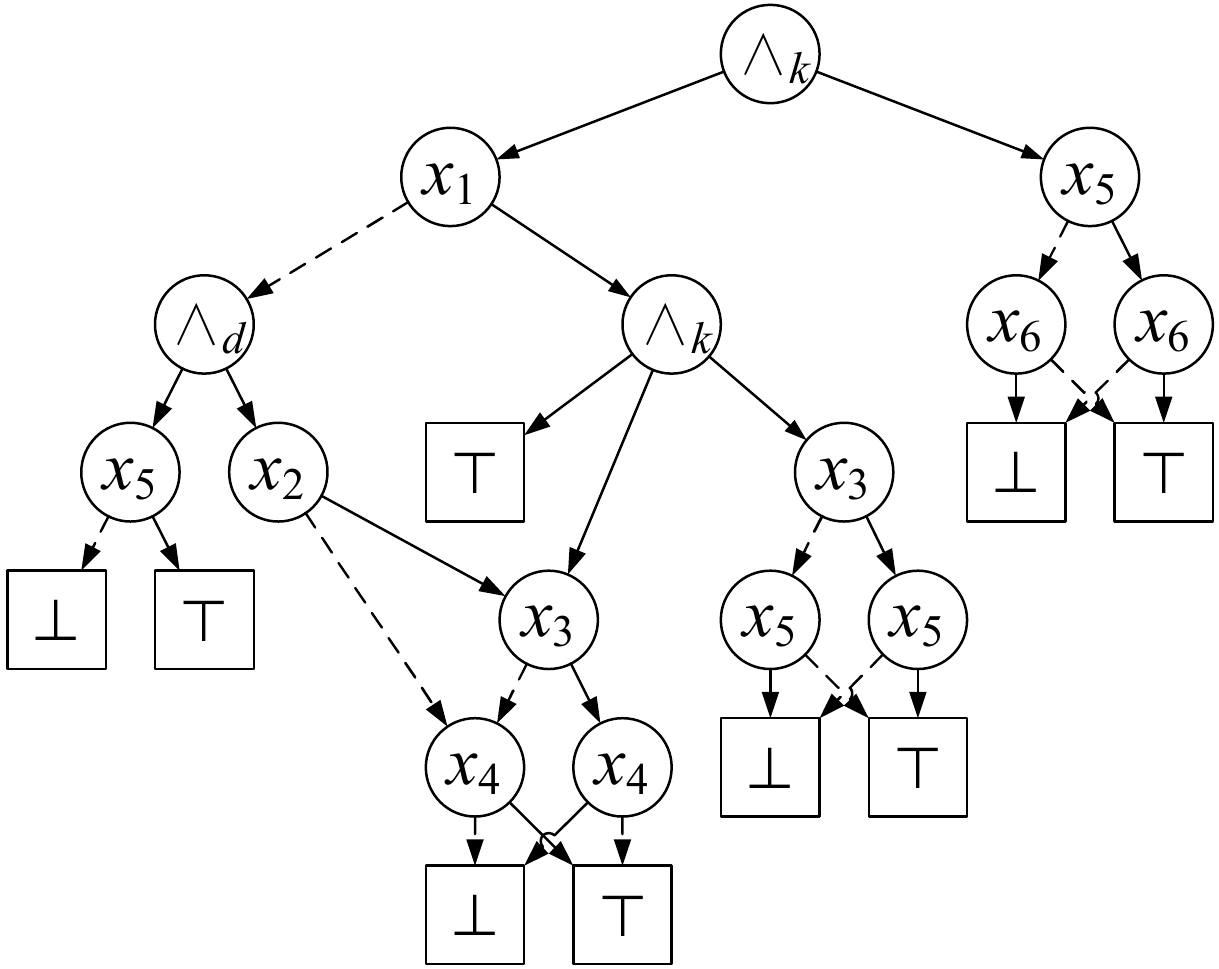}
	\caption{A diagram in \Ctwo{} representing {$(x_5 \EQU x_6) \AND \Big[\big[\NOT x_1 \AND x_5 \AND [(\NOT x_2 \AND x_4 ) \OR ( x_2 \AND (x_3 \EQU \NOT x_4) )]\big] \OR \big[ x_1 \AND ( x_3 \EQU \NOT x_4) \AND ( x_3 \EQU x_5 )\big] \Big]$, where the core child of the root is the child on the left hand side}
	}
	\label{fig:CCDD-example}
\end{figure}

We use  $\AND_d$ and $\AND_k$ to denote decomposed and kernelized conjunctions respectively.  
Figure \ref{fig:CCDD-example} depicts a CCDD.
Since \DecDNNF{} is a subset of \Ctwo{} and is known to be complete, we obtain the following result on the completeness of \Ctwo{}:

\begin{restatable}{theorem}{completeness}{}\label{thm:compl}
	Given a formula, there is at least one CCDD to represent it.
\end{restatable}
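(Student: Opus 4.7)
The plan is to exploit the inclusion relationship already noted in the paragraph preceding the statement: \DecDNNF{} is a subset of \Ctwo{}, and \DecDNNF{} is known to be a complete target language in the KC literature. Completeness of \Ctwo{} will then follow transitively, so the argument is essentially a two-line appeal to inclusion plus a cited result.

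More concretely, I would first argue that \DecDNNF{} $\subseteq$ \Ctwo{}. This requires checking that every \DecDNNF{} diagram, viewed as a \CDD{}, satisfies both conditions of Definition~\ref{def:RCDD}. The decision-condition ``$sym(u) \neq sym(v)$ for any decision descendant $v$ of $u$'' is implied by the read-once property of \DecDNNF{} (no variable repeats along any root-to-leaf path, hence in particular no ancestor-descendant pair of decision nodes can share a variable). The conjunction-condition is satisfied because every $\AND$-node of a \DecDNNF{} is decomposable by definition, which matches case (i) of Definition~\ref{def:RCDD}; the kernelization option is simply not invoked. Thus the syntactic restrictions defining \Ctwo{} are all met.

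Next I would invoke the known completeness of \DecDNNF{} over propositional formulas (standard in the KC map, e.g.\ Darwiche and Marquis), which gives, for every propositional formula $\varphi$, at least one \DecDNNF{} whose $\vartheta(\cdot)$ equals $\varphi$. Combining with the inclusion just established, the same diagram is a witness in \Ctwo{}. This concludes the proof.

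I do not anticipate any real obstacle: the statement is a corollary of a prior completeness result. If one preferred a self-contained argument, the alternative route would be to give a direct recursive construction of a CCDD from the parse tree of an arbitrary \NNF{} formula (e.g.\ build a full decision tree and treat every $\AND$ as decomposed once variables are partitioned), but this is strictly more work and unnecessary given the cited result. I would therefore keep the proof to the short inclusion argument.
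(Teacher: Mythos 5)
Your proof matches the paper's own argument: the paper derives Theorem~\ref{thm:compl} exactly by noting that \DecDNNF{} is a subset of \Ctwo{} (read-once decisions give the $sym(u)\neq sym(v)$ condition, and all conjunctions are decomposed) and appealing to the known completeness of \DecDNNF{}. Your additional explicit verification of the two conditions of Definition~\ref{def:RCDD} is a welcome elaboration but not a different route.
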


\section{Tractable Model Counting and Uniform Sampling on \Ctwo{}}
\label{sec:CT-US}

In this section, we show that \Ctwo{} can support model counting and uniform sampling in polytime. We first show how \Ctwo{} supports model counting in linear time. 
We perform model counting on \Ctwo{} by a bottom-up traversal on the DAG as follows:

\begin{restatable}{proposition}{counting}{}\label{prop:counting}
	Given a node $u$ in \Ctwo{} with $\mathit{Vars}(u) \subseteq X$ and a node $v$ in $\D_u$, we use $CT(v)$ to denote the model count of $\vartheta(v)$ over $X$.
	Then $CT(u)$ can be recursively computed in linear time in $|\D_u|$:
	\begin{equation*}\label{eq:sharp-DecDNNF}
	CT(u) = \begin{cases}
	0 & sym(u) = \bot \\
	2^{|X|} & sym(u) = \top \\
	c^{-1} \cdot \prod_{v \in Ch(u)}CT(v) & sym(u) = \AND_d  \\
	\dfrac{CT(ch_{\mathit{core}}(u))}{2^{|Ch(u)| - 1}} & sym(u) = \AND_k  \\
	\dfrac{CT(lo(u)) + CT(hi(u))}{2} & {\text{otherwise}}
	\end{cases}
	\end{equation*}
	where $c = 2^{(|Ch(u)| - 1) \cdot |X|}$.
\end{restatable}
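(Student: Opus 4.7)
The plan is to proceed by structural induction on the DAG rooted at $u$, using memoization to argue about runtime. Let $CT_Y(v)$ denote the model count of $\vartheta(v)$ over a variable set $Y \supseteq \mathit{Vars}(v)$. Note the easy scaling identity $CT_X(v) = CT_{\mathit{Vars}(v)}(v) \cdot 2^{|X| - |\mathit{Vars}(v)|}$, which I will use freely when relating counts over different supersets.

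\textbf{Base cases.} When $sym(u) = \bot$, $\vartheta(u)$ is unsatisfiable, so $CT(u) = 0$. When $sym(u) = \top$, every assignment over $X$ is a model, so $CT(u) = 2^{|X|}$.

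\textbf{Inductive cases.} First, for a decision node with $sym(u) = x$, the formula is $\vartheta(u) = (\neg x \wedge \vartheta(lo(u))) \vee (x \wedge \vartheta(hi(u)))$. Crucially, $x \notin \mathit{Vars}(lo(u)) \cup \mathit{Vars}(hi(u))$ because decisions are read-once in a \Ctwo{} (a consequence of the $sym(u) \neq sym(v)$ constraint for decision descendants, combined with the \NNF{} structure). Hence exactly half of the $X$-models of $\vartheta(lo(u))$ set $x$ to $\mathit{false}$, and these are precisely the $X$-models of $\vartheta(u)$ that set $x$ to $\mathit{false}$; a symmetric statement holds for $hi(u)$. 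The two sets are disjoint, giving $CT(u) = (CT(lo(u)) + CT(hi(u)))/2$.

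Next, for a kernelized conjunction, the statement follows directly from Proposition \ref{prop:kernelizedcount} applied with $m = CT(ch_{\mathit{core}}(u))$ and $|Ch_{rem}(u)| = |Ch(u)| - 1$, once we have the inductive hypothesis on the core. For a decomposed conjunction with children $v_1, \ldots, v_k$, decomposability gives pairwise disjoint variable sets $\mathit{Vars}(v_i)$, so a model of $\vartheta(u)$ over $\bigcup_i \mathit{Vars}(v_i)$ is obtained by independently choosing a model for each $\vartheta(v_i)$. Using the scaling identity on each factor,
\begin{equation*}
CT(u) \;=\; 2^{|X| - \sum_i |\mathit{Vars}(v_i)|} \prod_{i=1}^{k} CT_{\mathit{Vars}(v_i)}(v_i) \;=\; 2^{|X| - \sum_i |\mathit{Vars}(v_i)|} \prod_{i=1}^{k} \frac{CT(v_i)}{2^{|X| - |\mathit{Vars}(v_i)|}}.
\end{equation*}
The $|\mathit{Vars}(v_i)|$ terms cancel, leaving the factor $2^{|X|(1-k)} = 1/2^{(|Ch(u)|-1)\cdot |X|}$, which matches the stated constant $c$.

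\textbf{Runtime.} Memoizing $CT(v)$ per node ensures each node is evaluated exactly once; the work at a node is $O(|Ch(v)|)$ arithmetic operations. Summing over nodes gives $O(\sum_v |Ch(v)|) = O(|\D_u|)$.

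I expect the main technical care to lie in the $\AND_d$ algebra — getting the exponent of $2$ exactly right when translating between counts over $\mathit{Vars}(v_i)$ and counts over the global $X$ — and in cleanly invoking the read-once property at decision nodes to justify the "half on each side" step, since the paper's definition phrases it via $sym$ rather than variable occurrence.
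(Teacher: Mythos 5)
Your proof is correct and follows essentially the same route as the paper's: base cases are immediate, the kernelized case is delegated to Proposition~\ref{prop:kernelizedcount}, the decision case uses the read-once/halving argument, and the decomposed case is the same variable-partition calculation (you do the $k$-ary product directly via a scaling identity where the paper does the binary case and iterates, but the algebra is identical). Your explicit justification of the read-once property from the $sym(u)\neq sym(v)$ constraint and the memoization-based runtime accounting are slightly more careful than the paper's, but not a different argument.
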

\begin{proof}
	It is easy to see the case for the leaf nodes. The case for kernelized conjunctions was discussed in Proposition \ref{prop:kernelizedcount}. For a decision node $u$, we can see that there are only half of the models over $X$ of its low (resp. high) child satisfying $\lnot sym(u) \land \vartheta(u)$ (resp. $sym(u) \land \vartheta(u)$), since $sym(u)$ does not appear in $\vartheta(lo(u))$ (resp. $\vartheta(hi(u))$). Now we discuss the case for decomposed conjunctions. Given a decomposed conjunction $u$, we show that this proposition holds when $|Ch(u)| = 2$. For the cases $|Ch(u)| > 2$, we only need to iteratively use the conclusion of the case $|Ch(u)| = 2$.
	Assume that $Ch(u) = \{v, w\}$. We can divide $X$ into three disjoint sets $X_1 = \mathit{Vars}(v)$, $X_2 = \mathit{Vars}(w)$, and $X_3 = X \setminus (X_1 \cup X_2)$. 
	Assume that $\vartheta(v)$ and $\vartheta(w)$ have $m_1$ and $m_2$ models over $X_1$ and $X_2$, respectively. Then $\vartheta(v)$ and $\vartheta(w)$ have $m_1 \cdot 2^{|X_2| + |X_3|}$ and $m_2 \cdot 2^{|X_1| + |X_3|}$ models over $X$, respectively. 
	$\vartheta(u)$ has $m_1 \cdot m_2$ models over $X_1 \cup X_2$, and has $m_1 \cdot m_2 \cdot 2^{|X_3|}$ models over $X$. It is easy to see the following equation:
	$$m_1 \cdot m_2 \cdot 2^{|X_3|} = \frac{m_1 \cdot 2^{|X_2| + |X_3|} \cdot m_2 \cdot 2^{|X_1| + |X_3|}}{2^{|X|}}$$
\end{proof}

Now we turn to uniform sampling, which is a new query in knowledge compilation \cite{KUS}.
We present the sampling algorithm on \Ctwo{} in Algorithm \ref{alg:Sample}, which takes in a consistent \CCDD{} node $u$, and returns a random model from $\satisfying{u}$.
Algorithm Sample first invokes SampleSub in Algorithm \ref{alg:SampleSub} to get a partial assignment $\omega$ of $\vartheta(u)$. If a variable $x$ does not appear in $\omega$, we will assign $x$ as a random Boolean value in lines \ref{line:Sample:supplement:begin}--\ref{line:Sample:supplement:end} via a Bernoulli distribution with parameter 0.5.
The main idea of Algorithm SampleSub is that according to the model count of each node in the \CCDD{}, we perform a random search along a subtree in the \CCDD{}, which corresponds to a partial assignment, in a top-down way.
If $u = \Angle{\top}$, SampleSub returns the empty set in line \ref{line:SampleSub:base}.
If $u$ is a decomposed node, we sample independently from its children in line \ref{line:SampleSub:decomposition}. 
If $u$ is a kernelized node, SampleSub samples from the core child first and then samples from the remaining literal equivalences (lines \ref{line:SampleSub:kernelization:begin}--\ref{line:SampleSub:kernelization:end}). 
We remark that for an equivalence node, its child $v$ represents a literal and we use $\omega(v)$ to denote the assignment on $sym(v)$ corresponding to the literal represented by $v$.
If $u$ is a decision node, SampleSub assigns a random value to $sym(u)$ according to model count ratio of low child to high child, and then samples from the chosen child (lines \ref{line:SampleSub:decision:begin}--\ref{line:SampleSub:decision:end}). 
\begin{algorithm}[!htbp]
	\caption{Sample($u$)} \label{alg:Sample}
	$\omega \leftarrow \textrm{SampleSub}(u, \mathit{Vars}(u))$\;
	\For { each variable $x \in \mathit{Vars}(u) \setminus \mathit{Vars}(\omega)$} {\label{line:Sample:supplement:begin}
		$b \sim \mathit{Bernoulli}(0.5)$\;
		$\omega \leftarrow \omega \cup \{x = b\}$
	}\label{line:Sample:supplement:end}
	\KwRet $\omega$\;
\end{algorithm}
\begin{algorithm}[!htbp]
	\caption{SampleSub($u$, $X$)}\label{alg:SampleSub}
		\lIf {$sym(u) = \top$} {\KwRet{$\emptyset$}}\label{line:SampleSub:base}
		\lElseIf {$sym(u) = \AND_d$} {\KwRet $\bigcup_{v \in Ch(u)} \text{SampleSub}(v, X)$}\label{line:SampleSub:decomposition}
		\uElseIf {$sym(u) = \AND_k$} {\label{line:SampleSub:kernelization:begin}
			$\omega \leftarrow \text{SampleSub}(ch_{core}(u), X)$\;
			\For { each equivalence $v \in Ch(u)$ } {
				\lIf {$(sym(v) = false) \in \omega$} {$\omega \leftarrow \omega \cup \omega(lo(v))$}
				\lElseIf {$(sym(v) = true) \in \omega$} {$\omega \leftarrow \omega \cup \omega(hi(v))$}
				\lElse {$\omega \leftarrow \omega \cup \text{SampleSub}(v, X)$}
			}
		}\label{line:SampleSub:kernelization:end}
		\Else {\label{line:SampleSub:decision:begin}
			$p = \frac{CT(hi(u), X)}{CT(lo(u), X)+CT(hi(u), X)}$\;
			$b \sim \mathit{Bernoulli}(p)$\;
			\uIf {$b = false$} {\KwRet $\{sym(u) = false\} \cup \text{SampleSub}(lo(u), X)$}
			\lElse {\KwRet{} $\{sym(u) = true\} \cup \text{SampleSub}(hi(u), X)$}
		}\label{line:SampleSub:decision:end}
\end{algorithm}

Note that according to Proposition \ref{prop:counting}, we can count models for all nodes of $\D_u$ in linear time. We assume that we finish the calling of $\oCT(u)$ before we call Sample($u$). Thus, Sample($u$) terminates in $O(|\mathit{Vars}(u)|)$ after the model count on each node is labeled.

\begin{proposition}
	Given a consistent CCDD node rooted at $u$, Sample($u$) can output each model with probability $\frac{1}{\oCT(u)}$.
\end{proposition}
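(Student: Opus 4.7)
The plan is to prove the claim by structural induction on the CCDD rooted at $u$. I fix $X = \mathit{Vars}(u)$ throughout, and for each node $v$ reachable from $u$ I aim to establish the strengthened invariant $(\star)$: for every full assignment $\alpha \in 2^X$, if we run SampleSub$(v, X)$ and then extend the returned partial assignment $\omega$ to $X$ by an independent $\mathrm{Bernoulli}(1/2)$ draw on each unset variable, the probability $R_v(\alpha)$ of outputting $\alpha$ equals $\Bracket{\alpha \models \vartheta(v)}/CT(v, X)$. Taking $v = u$ then gives the proposition, since Sample$(u)$ is exactly this combined procedure at the root.

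The base case $sym(v) = \top$ is immediate: SampleSub returns $\emptyset$, and the extension produces each $\alpha \in 2^X$ with probability $2^{-|X|} = 1/CT(v, X)$. The case $sym(v) = \bot$ is vacuous on a consistent input, since the propagated model counts always steer the parent away from any $\bot$ child. For a decision node $sym(v) = x$, I will use the standard read-once property that $x \notin \mathit{Vars}(lo(v)) \cup \mathit{Vars}(hi(v))$ and condition on $\alpha(x)$. Taking $\alpha(x) = \mathrm{false}$, the algorithm enters the $lo(v)$ branch with probability $1 - p = CT(lo(v), X)/(2\,CT(v, X))$ by the decision identity of Proposition~\ref{prop:counting}, and a short bookkeeping step absorbs a compensating factor of $2$ coming from the fact that $V_\omega$ has gained the variable $x$, so the outer extension covers one fewer variable. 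Composing with the inductive hypothesis on $lo(v)$ yields $R_v(\alpha) = \Bracket{\alpha \models \vartheta(lo(v))}/CT(v, X)$, which coincides with $\Bracket{\alpha \models \vartheta(v)}/CT(v, X)$ because $\alpha(x) = \mathrm{false}$; the $\alpha(x) = \mathrm{true}$ case is symmetric.

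For a decomposed conjunction with children $w_1, \ldots, w_k$ over pairwise disjoint variable sets, the recursive calls are independent and produce partial assignments on disjoint supports. Matching $\alpha$ factorises over the children, and the powers of $2$ that arise when converting each child's outer extension budget into the combined one cancel exactly against the factor $2^{(k-1)|X|}$ in the identity $CT(v, X) = 2^{-(k-1)|X|} \prod_i CT(w_i, X)$ from Proposition~\ref{prop:counting}. Combined with the inductive hypothesis on each $w_i$ and the observation that $\alpha \models \vartheta(v)$ iff $\alpha \models \vartheta(w_i)$ for every $i$, this yields $(\star)$ for $v$.

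The hardest case will be the kernelized conjunction, where for each equivalence child $w_j$ encoding $x_j \EQU l_j$ the value of $var(l_j)$ is either forced by the core sample (when SampleSub on the core happens to set $x_j$) or produced by a fresh coin flip in the recursive call on $w_j$ (when it does not), and I must argue that both modes cohere. I plan to split on whether $\alpha$ satisfies every equivalence. If some $x_{j^*} \EQU l_{j^*}$ is violated by $\alpha$, then in both branches of the loop the algorithm fixes $var(l_{j^*})$ to a value determined by the chosen value of $x_{j^*}$, so $R_v(\alpha) = 0$ already matches the right-hand side. If $\alpha$ satisfies every equivalence, a bookkeeping argument combining the $1/2$ factors from the fresh coin flips on the unforced $x_j$'s with the shrinkage of the outer extension's budget caused by the new entries that the loop adds to $V_\omega$ gives $R_v(\alpha) = 2^m R_c(\alpha)$, where $c = ch_{\mathit{core}}(v)$ and $m = |Ch(v)| - 1$. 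Applying the inductive hypothesis on $c$ and invoking $CT(v, X) = CT(c, X)/2^m$ from Proposition~\ref{prop:counting} then closes the induction.
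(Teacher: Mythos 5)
Your proof is correct and follows essentially the same route as the paper's: a structural induction over the node types ($\top$, decision, decomposed conjunction, kernelized conjunction) that matches the sampling probabilities against the counting recurrences of Proposition~\ref{prop:counting}. The only difference is presentational: the paper's induction hypothesis tracks the probability that SampleSub returns a given partial assignment $\omega$ as $CT(\omega,X)/CT(u,X)$ and defers the uniform completion of unset variables to the end, whereas you fold the Bernoulli$(1/2)$ extension into the invariant at every node, which makes the powers-of-two bookkeeping explicit but does not change the substance of the argument.
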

\begin{proof}
If we can prove a lemma that $\text{SampleSub}(u, X)$ can output a partial assignment $\omega$ with probability $\frac{CT(\omega, X)}{CT(u, X)}$  such that $\vartheta(u)|_{\omega} \equiv true$, then it is easy to see this proposition holds. It is easy to see that this lemma holds for constant CCDDs. We assume that this lemma holds with the number of nodes $|\mathcal{N}(\D_u)| \le n$. For the case with $|\mathcal{N}(\D_u)| = n + 1$, we proceed with case analysis:
\begin{itemize}
	\item $u$ is a decomposition node: This lemma holds since the events of sampling from two different children are independent.
	\item $u$ is a kernelization node: For the case with more than one literal equivalence, we assume that $v$ is a literal equivalence in $Ch(u)\setminus \{ch_{core}(u)\}$. $u$ is equivalent to the combination of two kernelized conjunction nodes $\Angle{\land_k, \{\Angle{\land_k, Ch(u) \setminus \{v\}}, v\}}$ with less literal equivalences. Without loss of generality, we assume that $u$ has only one literal equivalence $v$. According to the induction hypothesis, SampleSub($ch_{core}(u)$, $X$) can output a partial assignment $\omega$. If $(sym(u) = false) \in \omega$, then $\omega'=\omega \cup \omega(lo(v))$ satisfies $\vartheta(u)$, and $\frac{CT(\omega, X)}{CT(ch_{core}(u), X)} = \frac{CT(\omega', X)}{CT(u, X)}$. The case $(sym(u) = true) \in \omega$ is similar to the one $(sym(u) = false) \in \omega$. For the case $sym(u) \notin \mathit{Vars}(\omega)$, it is similar to the case where $u$ is a decomposition node.
	\item $u$ is a decision node: Without loss of generality, we assume that we get $b = true$ with a probability $p = \frac{CT(hi(u), X)}{CT(lo(u), X)+CT(hi(u), X)}$. According to the induction hypothesis, $\text{SampleSub}(hi(u), X)$ can output a partial assignment $\omega$. Let $\omega' = \{sym(u) = true\} \cup \omega$. It is easy to see that $\vartheta(u)|_{\omega'} \equiv true$ and the probability of outputing $\omega'$ is $p \cdot \frac{CT(\omega, X)}{CT(hi(u), X)} = \frac{CT(\omega', X)}{CT(u, X)}$.
\end{itemize}
\end{proof}

\begin{example}
Figure \ref{fig:CCDD-example-CT} shows how to use Proposition \ref{prop:counting} to perform model counting. After all counts are marked, we can perform uniform sampling, i.e., invoking Sample($v_0$). 
Then SampleSub($v_0$, $X$) is invoked, where $X = \{x_1, \ldots, x_6\}$ will be skipped in the following explanation.
In the calling of SampleSub($v_0$), we first invoke SampleSub($v_1$). 
We perform a Bernoulli sample with probability 0.5 and assume that we obtain a $\mathit{false}$ value.
Thus, we invoke SampleSub($v_2$), and then invoke SampleSub($v_3$) and SampleSub($v_4$).
SampleSub($v_3$) returns $\{x_5 = true\}$.
In the calling of SampleSub($v_4$), we perform a Bernoulli sample with probability 0.5 and assume that we obtain a $\mathit{false}$ value.
Thus, we invoke SampleSub($v_5$), which returns $\{x_4 = true\}$.
Then we backtrack to the calling of SampleSub($v_4$) and return $\{x_2 = \mathit{false}, x_4 = true\}$, and SampleSub($v_2$) returns $\{x_2 = \mathit{false}, x_4 = true, x_5 = true\}$.
After backtracking to SampleSub($v_1$), we return $\omega_1 = \{x_1 = false, x_2 = \mathit{false}, x_4 = true, x_5 = true\}$.
Since $(x_5 = true) \in \omega_1$, SampleSub($v_0$) returns $\omega_0 = \{x_1 = \mathit{false}, x_2 = \mathit{false}, x_4 = true, x_5 = true, x_6 = true\}$.
Finally, we perform a Bernoulli sample with probability 0.5 in the calling of Sample($v_0$) and assume that $x_3$ is assigned as $true$, and therefore we obtain the sample $\{x_1 = \mathit{false}, x_2 = \mathit{false}, x_3 = true, x_4 = true, x_5 = true, x_6 = true\}$.
\begin{figure}[tb]
	\centering
	\includegraphics[width = 0.65\linewidth]{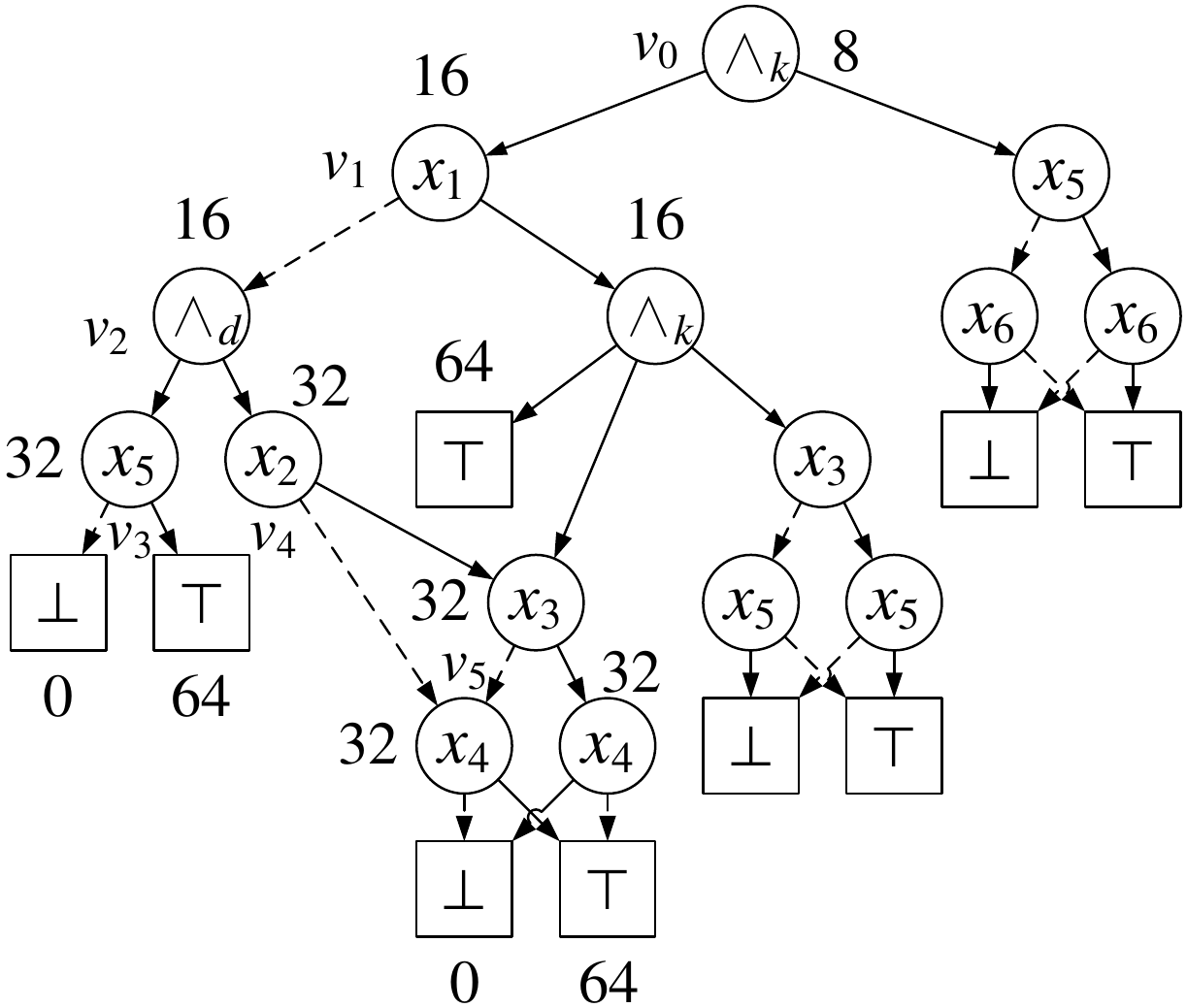}
	\caption{An illustration of performing model counting on the CCDD in Figure \ref{fig:CCDD-example}}
	\label{fig:CCDD-example-CT}
\end{figure}
\end{example}

\section{Scalable Compiler, Counter, and Sampler}
\label{sec:tools}

In this section, we turn our attention to the compilation of a given model into \Ctwo{}, and performing model counting and uniform sampling in practice. 
We remark that in the context of knowledge compilation, there are some other languages that are generalizations of \DecDNNF{} (see e.g., Sym-\ensuremath{\mathsf{DDG}} \cite{Bart:etal:14}). As far as we know, however, there are no scalable model counters or uniform samplers reported, based on these languages. 

\subsection{\Panini{}: Compilation to \Ctwo{}}

It is standard in knowledge compilation field to design compilers that take \CNF{} as input and output an equivalent represetation corresponding to target language. In the same spirit, our algorithm, called \Panini{} and described in Algorithm~\ref{alg:Panini}, takes in a  \CNF{} formula $\varphi$, and returns a CCDD representing $\varphi$. 

We first handle the base cases lines~\ref{line:Panini:base-case-begin}--\ref{line:Panini:base-case-end}.
We return $\Angle{\bot}$ and $\Angle{\top}$ in lines \ref{line:Panini:base-case-begin} and \ref{line:Panini:base-case-end} if $\varphi$ is $\mathit{false}$ and $true$, respectively. 
We then turn to the discovery and usage of literal equivalences in the formula to perform model counting as presented in lines~\ref{line:Panini:kernel-begin}--\ref{line:Panini:kernel-end}. We use a heuristic, {\Kernelizable}, to determine whether we should spend time in detecting and using literal equivalence because those steps are themselves possibly costly. We discuss  {\Kernelizable} further in Section~\ref{sec:exactmc-implementation}. When {\Kernelizable} returns $true$, we turn to call {\DetectLitEqu} to discover literal equivalences in the formula in line~\ref{line:Panini:liteq} and 
if a non-trivial literal equivalence is discovered, we proceed to perform the compilation with respect to kernelized conjunction in lines~\ref{line:Panini:substitute}--\ref{line:Panini:kernelized-node}. 
In particular, we first invoke \ConstructCore{} to perform literal-substitution (see Section \ref{sec:CDD:lit-equ}) to obtain the formula, $\hat{\varphi}$, corresponding to the core child, and then recursively call \Panini{} over $\hat{\varphi}$. 

If no non-trivial literal equivalence is found in line~\ref{line:Panini:liteq}, then the rest of the algorithm follows the template of a \DecDNNF{} compiler. We first invoke {\Decompose} in line~\ref{line:Panini:decompose} to determine if the formula $\varphi$ can be decomposed into components. In other words, we seeks to partition the $\varphi$ into sub-formulas such that each of the components is defined over a mutually disjoint set of variables.
If such a decomposition is not 
found,
we pick a variable $x$ and recursively invoke \Panini{} on the residual formulas $\varphi[x \mapsto false]$ and $\varphi[x \mapsto true]$.  

\begin{algorithm}[tb]
	\caption{\Panini($\varphi$)} \label{alg:Panini}
	\lIf {$\varphi = \mathit{false}$} {\KwRet $\Angle{\bot}$} \label{line:Panini:base-case-begin}
	\lIf {$\varphi = \mathit{true}$} {\KwRet $\Angle{\top}$} \label{line:Panini:base-case-end}
	\lIf {$\Cache(\varphi) \not= nil$} {\KwRet $\Cache(\varphi)$}
	\If {$\Kernelizable(\varphi)$} {\label{line:Panini:kernel-begin}
		$E \leftarrow \DetectLitEqu(\varphi)$\;\label{line:Panini:liteq}
		\If {$|\Floor{E}| > 0$}{ 
			$\hat{\varphi} \leftarrow \ConstructCore(\varphi, \Floor{E})$\; \label{line:Panini:substitute}
			$v \leftarrow \Panini(\hat{\varphi})$\; \label{line:Panini:kernel}
			$V \leftarrow \{\Angle{x \EQU l} \mid x \EQU l \in \Floor{E}\}$\;
			\KwRet $Cache(\varphi) \leftarrow \Angle{\AND_k, \{v\} \cup V}$\;\label{line:Panini:kernelized-node}
		}
	}\label{line:Panini:kernel-end}
	$\Psi \leftarrow \Decompose(\varphi)$\; \label{line:Panini:decompose}
	\uIf {$|\Psi| > 1$} {
		\KwRet $\mathit{Cache}(\varphi) \leftarrow \Angle{\AND_d, \{\Panini(\psi ) \mid \psi \in \Psi\}}$\label{line:Panini:decomposition}
	}
	\Else{ 
		$x \leftarrow \FuncPickGoodVar(\varphi)$\; \label{line:Panini:decision-begin}
		$w_0 \leftarrow \Panini(\varphi[x \mapsto false])$\;
		$w_1 \leftarrow \Panini(\varphi[x \mapsto true])$\; 
		\KwRet $\mathit{Cache}(\varphi) \leftarrow \Angle{x, w_0, w_1}$\; \label{line:Panini:decision-end}
	}
\end{algorithm}

We now employ a simple example to show how kernelization helps us to reduce the size of resulting DAG. For simplicity, we assume \FuncPickGoodVar{} gives variables in the lexicographic order, and \Kernelizable{} always returns $true$ or always returns $\mathit{false}$.
\begin{example}\label{exam:Panini}
	Consider the CNF formula $\varphi$: 
	\begin{align*}
	\varphi &= (\lnot x_1 \lor \lnot x_2 \lor x_3 ) \wedge ( \lnot x_1 \lor x_2 \lor \lnot x_3) \\ &\wedge ( x_1 \lor \lnot x_2 \lor \lnot x_3) \wedge (x_1 \lor x_2 \lor x_3) \wedge (\lnot x_1 \lor \lnot x_4)\\
	&\wedge ( x_1 \lor  x_4) \wedge (\lnot x_2 \lor \lnot x_5) \wedge (x_2 \lor  x_5) 
	\end{align*}
	with $X = \{x_1, \ldots, x_5\}$. Now, there are two cases:
	\begin{description}
		\item[Without Kernelization] If {\Kernelizable} is $\mathit{false}$, \Panini{} will generate the CCDD in Figure \ref{fig:CCDD-without-k}. 
		\item[With Kernelization]  
		If {\Kernelizable} is $\mathit{true}$, we can detect two literal equivalences $x_1 \EQU \lnot x_4$ and $x_2 \EQU \lnot x_5$, and thus the residual sub-formula is equivalent to $(x_1 \oplus x_2 \oplus x_3 = 1)$. After running lines 18--20, we have two other literal equivalences $x_2 \EQU \lnot x_3$ and $x_2 \EQU x_3$. The result corresponds to the CCDD
		in Figure \ref{fig:CCDD-with-k}. 
	\end{description}
	
\end{example}
\begin{figure}[!htbp]
	\centering
	\subfloat[CCDD without kernelization]{\label{fig:CCDD-without-k}
		\begin{minipage}[c]{0.75\linewidth}
			\centering
			\includegraphics[width = \textwidth]{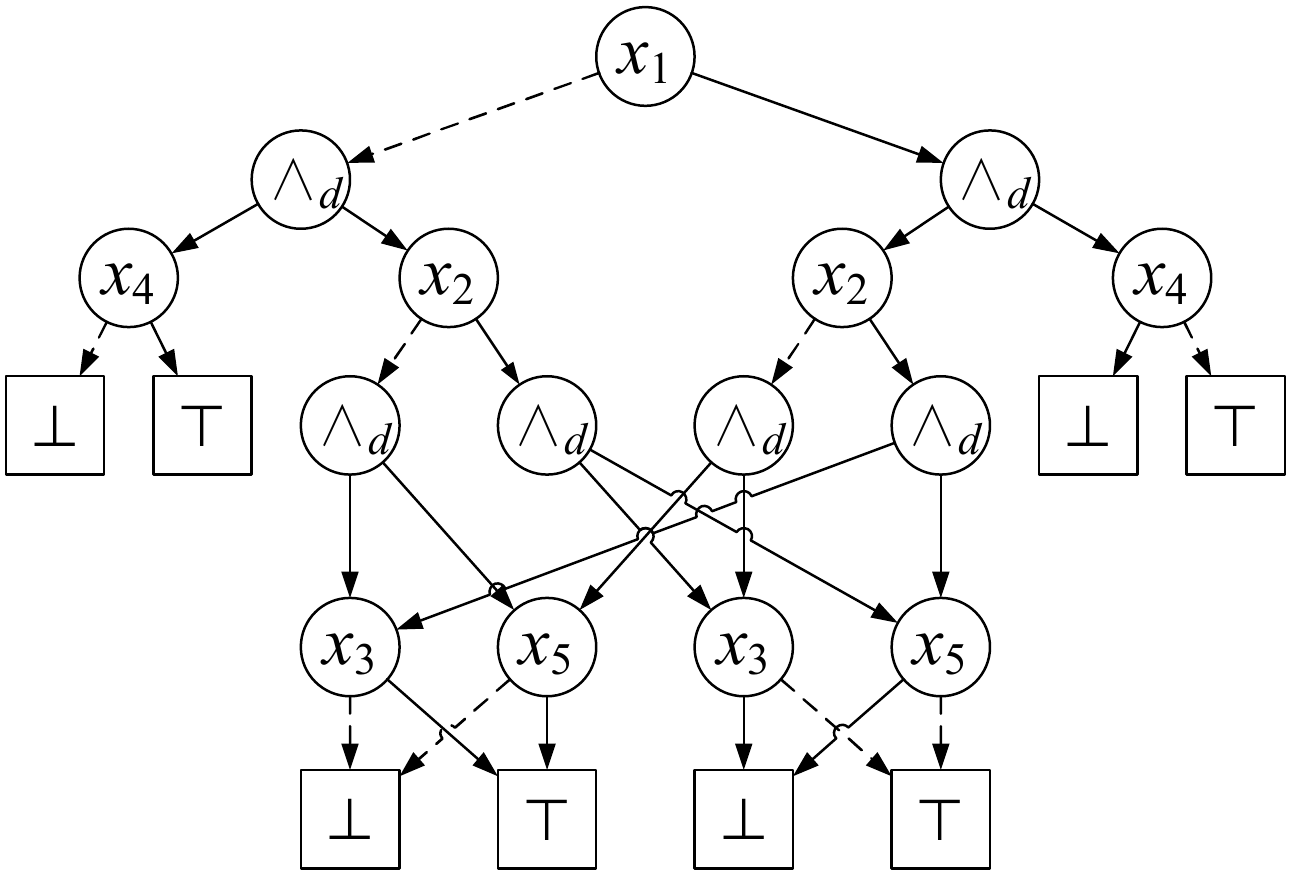}
		\end{minipage}
	}\\
	\subfloat[CCDD with kernelization]{\label{fig:CCDD-with-k}
		\begin{minipage}[c]{0.56\linewidth}
			\centering
			\includegraphics[width = \textwidth]{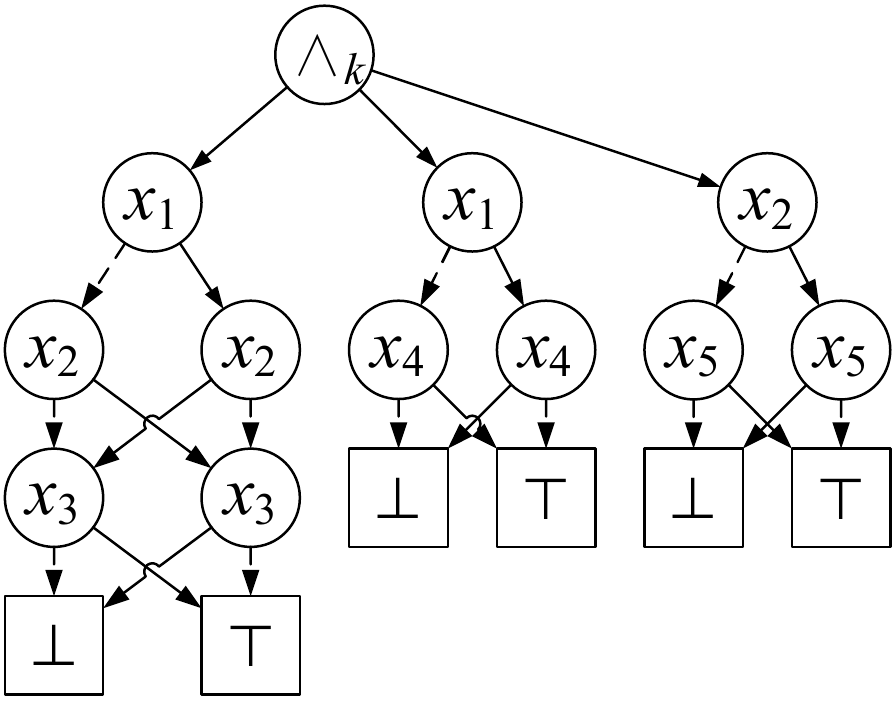}
		\end{minipage}
	}
	
	\caption{CCDDs corresponding to compiling the formula in Example \ref{exam:Panini}}
\end{figure}

\subsection{{\ExactMC}: A Scalable Model Counter}
\label{sec:tools:ExactMC}

As discussed in Sections \ref{sec:CDD}--\ref{sec:CT-US}, {\Ctwo} has two key properties: {\Ctwo} is complete, i.e., every formula can be represented using {\Ctwo} and it supports linear model counting. 
Thus, we can immediately obtain a model counter by invoking \Panini{} and Algorithm $\mathit{CT}$.
However, we observe that it often costs much memory to store a \CCDD{} for a complex \CNF{} formula.
We do not need to actually generate the \CCDD{}, but only need to perform search with respect to \Ctwo{} to perform model counting. 
This observation motivates us to design an individual model counter, {\ExactMC}, whose trace corresponds to {\Ctwo}. 
Algorithm \ref{alg:ExactMC}, \ExactMC{}, takes in a CNF formula $\varphi$ and the set of variables $X$ (initialized to $\mathit{Vars}(\varphi)$), and returns $|\satisfying{\varphi}|$. {\ExactMC} is based on the architecture of search-based model counters, as shown in Algorithm~\ref{alg:Search}. 

We first handle the base cases lines~\ref{line:ExactMC:base-case-begin}--\ref{line:ExactMC:base-case-end} corresponding to the first two cases in Proposition \ref{prop:counting}.  
Since we are interested in computing the number of satisfying assignments over $X$, we return $2^{|X|}$ in line~\ref{line:ExactMC:base-case-end} in case $\varphi$ is $true$. 
We then turn to the discovery and usage of literal equivalences in the formula to perform model counting as presented in lines~\ref{line:ExactMC:kernel-begin}--\ref{line:ExactMC:kernel-end}. When {\Kernelizable} returns $true$, we turn to {\DetectLitEqu} to discover literal equivalences in the formula in line~\ref{line:ExactMC:liteq} and 
if a non-trivial literal equivalence is discovered, we proceed to perform exact model counting with respect to kernelized conjunction in lines~\ref{line:ExactMC:substitute}--\ref{line:ExactMC:kernel-counting} (corresponding to the fourth case in Proposition \ref{prop:counting}, where $|\Floor{E}|$ is equal to the number of children minus one). 
In particular, we first invoke \ConstructCore{} to perform literal-substitution (see Section \ref{sec:CDD:lit-equ}) to obtain the formula, $\hat{\varphi}$, corresponding to the core child, and then recursively call {\ExactMC} over $\hat{\varphi}$. 

If no non-trivial literal equivalence is found in line~\ref{line:ExactMC:liteq}, then the rest of the algorithm follows the template of search-based model counters. We first invoke {\Decompose} in line~\ref{line:ExactMC:decompose} to determine if the formula $\varphi$ can be decomposed into components. If such a decomposition is not 
found,
we pick a variable $x$ and recursively invoke {\ExactMC} on the residual formulas $\varphi[x \mapsto false]$ and $\varphi[x \mapsto true]$.  We remark that lines~\ref{line:ExactMC:product}--\ref{line:ExactMC:product2} and lines \ref{line:ExactMC:decision-begin}--\ref{line:ExactMC:decision-end} correspond to the third and fifth cases in Proposition \ref{prop:counting}, respectively. 

\begin{algorithm}[tb]
	\caption{\ExactMC($\varphi$, $X$)} \label{alg:ExactMC}
	\lIf {$\varphi = \mathit{false}$} {\KwRet $0$} \label{line:ExactMC:base-case-begin}
	\lIf {$\varphi = \mathit{true}$} {\KwRet $2^{|X|}$} \label{line:ExactMC:base-case-end}
	\lIf {$\Cache(\varphi) \not= nil$} {\KwRet $\Cache(\varphi)$}
	\If {$\Kernelizable(\varphi)$} {\label{line:ExactMC:kernel-begin}
		$E \leftarrow \DetectLitEqu(\varphi)$\;\label{line:ExactMC:liteq}
		\If {$|\Floor{E}| > 0$}{ 
			$\hat{\varphi} \leftarrow \ConstructCore(\varphi, \Floor{E})$\; \label{line:ExactMC:substitute}
			$c \leftarrow \ExactMC(\hat{\varphi}, X)$\; \label{line:ExactMC:kernel-counting}
			\KwRet $\Cache(\varphi) \leftarrow \frac{c}{2 ^ {|\Floor{E}|}}$\;
		}
	}\label{line:ExactMC:kernel-end}
	$\Psi \leftarrow \Decompose(\varphi)$\; \label{line:ExactMC:decompose}
	\uIf {$|\Psi| > 1$} {
		$c \leftarrow \prod_{\psi \in \Psi}\{\ExactMC(\psi, X) \}$\;\label{line:ExactMC:product}
		\KwRet $\Cache(\varphi) \leftarrow \frac{c}{2^{(|\Psi| - 1) \cdot |X|}}$\label{line:ExactMC:product2}
	}
	\Else{ 
		$x \leftarrow \FuncPickGoodVar(\varphi)$\; \label{line:ExactMC:decision-begin}
		$c_0 \leftarrow \ExactMC(\varphi[x \mapsto false], X)$\;
		$c_1 \leftarrow \ExactMC(\varphi[x \mapsto true], X)$\; 
		\KwRet $\Cache(\varphi) \leftarrow \frac{c_0 + c_1}{2}$\; \label{line:ExactMC:decision-end}
	}
\end{algorithm}

\subsection{{\ExactSamp}: A Scalable Uniform Sampler}
\label{sec:tools:US}

Since {\Ctwo} is complete and supports tractable model counting and uniform sampling, we can immediately obtain a uniform sampler by invoking Algorithms \Panini{}, $\mathit{CT}$, and Sample. The workflow of our uniform sampler, called {\ExactSamp}, is as follows:

\begin{itemize}
	\item First, we invoke \Panini{} to transform a \CNF{} formula into an equivalent CCDD;
	\item Second, we invoke $\mathit{CT}$ to label model count for each node in the CCDD; and
	\item Finally, we invoke Algorithm Sample $s$ times on the CCDD to generate $s$ identically and independently distributed samples.
\end{itemize}
Among the above three steps, the first one is often the most time-consuming but can be performed \emph{offline}. The compiling time is amortized over \emph{online} callings of Algorithm Sample for sample generation.
In practice, this setting facilitates the end-user (e.g.,
verification engineer who typically invokes a sampler repeatedly till a bug is triggered \cite{Naveh:etal:06}).

\subsection{Implementation}\label{sec:exactmc-implementation}

Since the core contribution of our work lies in the on-the-fly construction and usage of kernelized conjunction nodes, we now discuss the implementation details that are crucial for runtime efficiency of our tools. As is the case for most heuristics in SAT solving and related communities, we selected parameters empirically. Given the original formula $\varphi$, we will use $\#\mathit{NonUnitVars}$ to denote the number of variables appearing in the non-unit clauses of $\varphi$.
\begin{description}
	\item[{\Kernelizable}] As mentioned earlier, the detection and usage of literal equivalences can be significantly advantageous but our preliminary experiments indicated the need for caution. In particular, we observed that the implicit construction of kernelized conjunction node over the trace was not helpful for {\em easy} instances. To this end, we rely on the number of variables as a proxy for the hardness of a formula, in particular at every level of recursion, we classify a formula $\varphi$ to be easy if  $|Vars(\varphi)| \le \mathit{easy\_bound}$, where $\mathit{easy\_bound}$ is defined by $\min(128, \#\mathit{NonUnitVars} / 2)$. If the formula $\varphi$ is classified as easy, then {\Kernelizable} returns $\mathit{false}$. Else, we consider the search path from the last kernelization (if no kernelization, then the root) to the current node. If the number of unit clauses on the path is greater than 48 and also greater than twice the number of decisions on the path, {\Kernelizable} returns $true$. The intuition behind the usage of unit clauses is that unit clauses are often useful to simplify the current sub-formula and thus possibly lead to many literal equivalences. In the other cases,  \Kernelizable{} returns $\mathit{false}$. We empirically determine the heuristic to have good performance.
	
	\item[{\DetectLitEqu}]  Recall, we need to check for a chosen pair of literals $l_1$ and $l_2$, 
	whether $l_1 \EQU l_2$ is a literal equivalence implied by $\varphi$ in \DetectLitEqu{}. For an efficient check, we rely on using implicit Boolean Constraint Propagation (i-BCP) for the assignments $l_1 \wedge \neg l_2$ and $\neg l_1 \wedge l_2$. The usage of i-BCP in model counting dates back to sharpSAT~\cite{Thurley2006}. We perform some simplications on each component in order to detect more literal equivalences
	which includes removing literals from clauses, and unnecessary clauses. 
	In particular, we designed a pre-processor called \PreLite\footnote{
		We remark that the design of \PreLite{} is similar to the pmc \cite{pmc}  pre-processor.
	}
	to perform the initial kernalization on the original formula. 
	\item[Prime Literal Equivalences] We employ union-find sets to represent prime literal equivalences, which allows us to efficiently compute prime literal equivalences from a set of literal equivalences. 
	\item[Decision Heuristics] We combine the widely used heuristic minfill \cite{book:Darwiche:09}
	and a new dynamic ordering, which we call 
	{\em dynamic combined largest product} (DLCP) to pick good variables. Given a variable, the DLCP value is the product of the weighted sum of negative appearances and 
	positive appearances of the variable. 
	Given an appearance, the heuristic considers the following cases:
	(i) if it is in an original binary clause, the weight is 2; 
	(ii) if it is in a learnt binary clause, the weight is 1; 
	(iii) if it is in an original non-binary clause with $m$ literals, the weight is $\frac{1}{m}$; 
	otherwise, (iv) the weight is 0.
	If the minfill treewidth is greater than a crossover constant $\min(128, \#\mathit{NonUnitVars}/c)$,
	we use DLCP, otherwise, minfill.
	We choose $c = 5$ for compilation and $c = 7$ for counting.
	We observed in the experiments that for an instance with high treewidth, DLCP is often useful to lead to a sub-formula with many literal equivalences after assigning some variables.
	%
\end{description}

\section{Experimental Evaluation}
\label{sec:experiments}

We implemented prototypes of \Panini{}, \ExactMC{}, \ExactSamp{} in C++.
We evaluated these tools \footnote{\Panini{}, \ExactMC{} and \PreLite{} will be available at \url{https://github.com/meelgroup/KCBox}}
on a comprehensive set of 1114 benchmarks
\footnote{The benchmarks are from the following sites: \\
\url{https://www.cril.univ-artois.fr/KC/benchmarks.html} \\
\url{https://github.com/meelgroup/sampling-benchmarks} \\
\url{https://github.com/dfremont/counting-benchmarks} \\ 
\url{https://www.cs.ubc.ca/hoos/SATLIB/benchm.html}
}
from a wide range of application areas, including automated planning, Bayesian networks, configuration, combinatorial circuits, inductive inference, model checking, program synthesis, and quantitative information flow (QIF) analysis. These instances have been employed in the past to evaluate model counting and knowledge compilation techniques~\cite{D4,Lai:etal:13,Lai:etal:17,Ganak,Fremont:etal:17}.
The experiments were run on a cluster\footnote{
The cluster is a typical HPC cluster where jobs are run through a job queue.
}
where each node has 2xE5-2690v3 CPUs with 24 cores and 96GB of RAM. 
Each instance was run on a single core with a timeout of 3600 seconds and 4GB memory.

\subsection{Knowledge Compilation}

\begin{table}[tb]
	\centering
	\footnotesize
	\aboverulesep = 0.2ex
	\belowrulesep = 0.2ex
	\begin{tabularx}{\linewidth}{l*{6}{>{\centering\arraybackslash}X}}\toprule
		\multirow{2}*{domain (\#)} & \LOBDDC{}{} & \LSDD{} & \multicolumn{3}{c}{\DecDNNF{}} & \Ctwo{} \\\cmidrule{4-6}
		{} & BDDC & miniC2D & c2d & Dsharp & \dfour{} & \Panini{} \\\midrule
		Bayesian-Networks (201) & 157 & 153 & \textbf{164} & 113 & 154 & 161 \\
		BlastedSMT (200) & 158 & 164 & \textbf{165} & 135 & 163 & 161 \\
		Circuit (56) & 35 & 33 & 34 & 30 & 37 & \textbf{41} \\
		Configuration (35) & 31 & 29 & \textbf{35} & 21 & 32 & 32 \\
		Inductive-Inference (41) & 15 & 15 & \textbf{19} & 15 & 15 & \textbf{19} \\
		Model-Checking (78) & 66 & 68 & 72 & 46 & 74 & \textbf{76} \\
		Planning (243) & 188 & 168 & \textbf{192} & 144 & 187 & \textbf{192} \\
		Program-Synthesis (221) & 85 & 57 & 63 & 61 & 77 & \textbf{89} \\
		QIF (39) & 10 & 6 & \textbf{17} & 5 & 8 & 13 \\\midrule[0.02em]
		Total (1114) & 745 & 693 & 761 & 570 & 747 & \textbf{782} \\\bottomrule
	\end{tabularx}
	\caption{Compiling performance between \LOBDDC{}{}, \LSDD{}, \dDNNF{}, and \Ctwo{}, where each cell below language \ensuremath{\mathsf{L}} refers to the number of instances compiled successfully into target \ensuremath{\mathsf{L}}
	} \label{tab:expri:compilation}
\end{table}

\begin{table*}[tb]
	\scriptsize
	\centering
	\aboverulesep = 0.2ex
	\belowrulesep = 0.2ex
	\begin{tabularx}{\linewidth}{l*{7}{>{\centering\arraybackslash}X}}\toprule
		\multirow{2}*{domain/instance} & \multirow{2}*{BDDC} & \multirow{2}*{miniC2D} & \multirow{2}*{c2d} & \multirow{2}*{Dsharp} & \multirow{2}*{D4} & \multicolumn{2}{c}{Panini} \\ \cmidrule{7-8}
		{} & {} & {} & {} & {} & {} & size & \#knodes \\\midrule
		Bayesian-Networks/50-20-9-q  & 1.6e6 & 5.2e6 & 2.0e6 & -- & -- & 6.2e5 & 6.2e4 \\
		BlastedSMT/squaring12 & -- & 8.4e7 & -- & -- & 4.7e8 & 5.4e6 & 1.2e5 \\
		Circuit/s13207.1 & -- & -- & -- & -- &  & 1.9e5 & 1.8e4 \\
		Configuration/C210\_FS & -- & -- & 2.2e7 & -- & -- & -- & -- \\
		Inductive-Inference/ii32b1 & -- & -- & 1.7e7 & -- & -- & 1.0e7 & 0 \\
		Model-Checking/bmc-galileo-8 & -- & -- & -- & 1.4e6 & 1.3e7 & 8.0e7 & 6.7e2 \\
		Planning/blocks\_right\_4\_p\_t6 & -- & -- & -- & -- & -- & 4.2e7 & 1.0e6 \\
		Program-Synthesis/sygus\_09A-1 & -- & -- & -- & -- & -- & 4.2e7 & 1.6e5 \\
		QIF/min-16s & -- & -- & 1.3e8 & -- & -- & -- & --  \\
		\bottomrule
	\end{tabularx}
	\caption{Compilation statistics on selected instances, where \textquoted{--} denotes timeout or out of memory, \textquoted{\#knodes} denotes the total number of kernelized nodes, and the other columns are about compilation size}
	\label{tab:expri:compilation:kernel}
\end{table*}

We compared \Panini{} with state-of-the-art compilers for 
the following target languages:
(i) \LSDD{} with miniC2D \cite{Oztok:Darwiche:15}; 
(ii) \LOBDDC{}{} with BDDC \cite{Lai:etal:17}; 
(iii) \DecDNNF{} with c2d \cite{c2d}, Dsharp \cite{Dsharp} and \dfour{} \cite{D4}. 
We used the widely employed pre-processing tool pmc~\cite{pmc} for all the instances, which preserves the equivalence between input instance and pre-processed instance and is quite helpful for improving the efficiency of knowledge compilers.
We employed the minfill heuristic for variable ordering in BDDC, miniC2D, and c2d, which has been shown to significantly improve runtime and space performance~\cite{Dsharp,Lai:etal:17}. 
Dsharp and {\dfour} employ their own custom variable ordering heuristics, which were shown to improve their performance \cite{Dsharp,D4}. 

Table \ref{tab:expri:compilation} shows the total performance of the six compilers compiling from \CNF{} to the target language. Overall, \Panini{} compiled 37, 89, 21, 212, and 35 more instances than BDDC, miniC2D, c2d, Dsharp, and \dfour{}, respectively. 
We remark that \Panini{} compiled 58, 168, 43, 251, and 44 more instances than BDDC, miniC2D, c2d, Dsharp, and D4 respectively without the usage of pmc.
Figures \ref{fig:compilation:time} and \ref{fig:compilation:sizes} show the cactus plots for runtime and compilation sizes (in terms of edges in the DAG) for all six compilers.
The $x$-axis gives the number of benchmarks; and
the $y$-axis is compiling time (resp. compilation sizes), i.e., 
a point $(x, y)$ in Figure~\ref{fig:compilation:time} shows that $x$ benchmarks took less than or equal to $y$ seconds to compile. 
The results show that \Panini{} can give start-of-the-art compilation
both in runtime and compiled size.
We show the space performance of \Ctwo{} on some selected instances in Table \ref{tab:expri:compilation:kernel}.
The experimental results show that \Ctwo{} has obvious space advantage and there are many kernelized nodes in the compiled forms.

\begin{figure*}[tb]
	\centering
	\subfloat[Compilation Time]{\label{fig:compilation:time}
		\begin{minipage}[c]{0.5\linewidth}
			\centering
			\includegraphics[width = \textwidth]{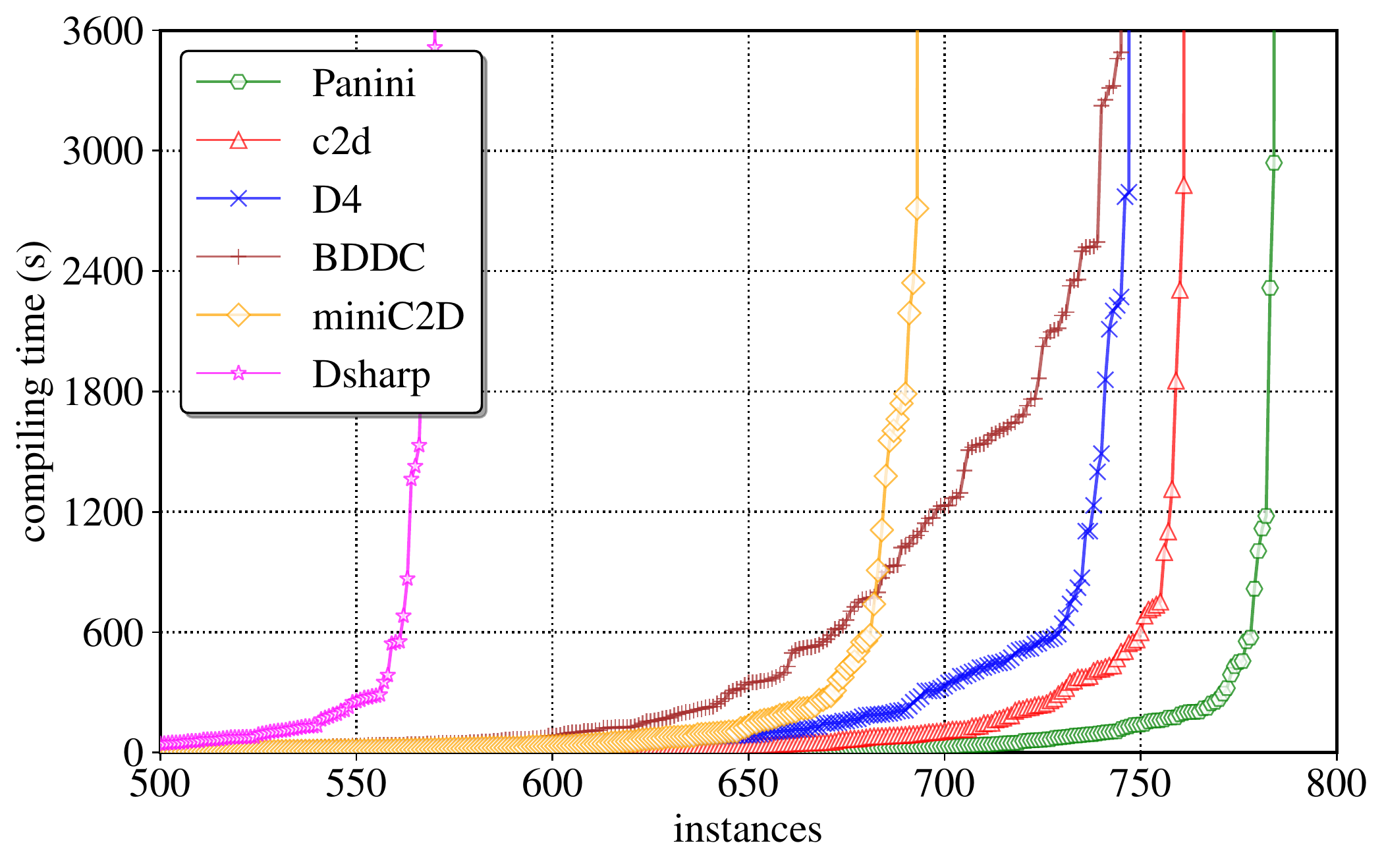}
		\end{minipage}
	}
	\subfloat[Compilation Size]{\label{fig:compilation:sizes}
		\begin{minipage}[c]{0.5\linewidth}
			\centering
			\includegraphics[width = \textwidth]{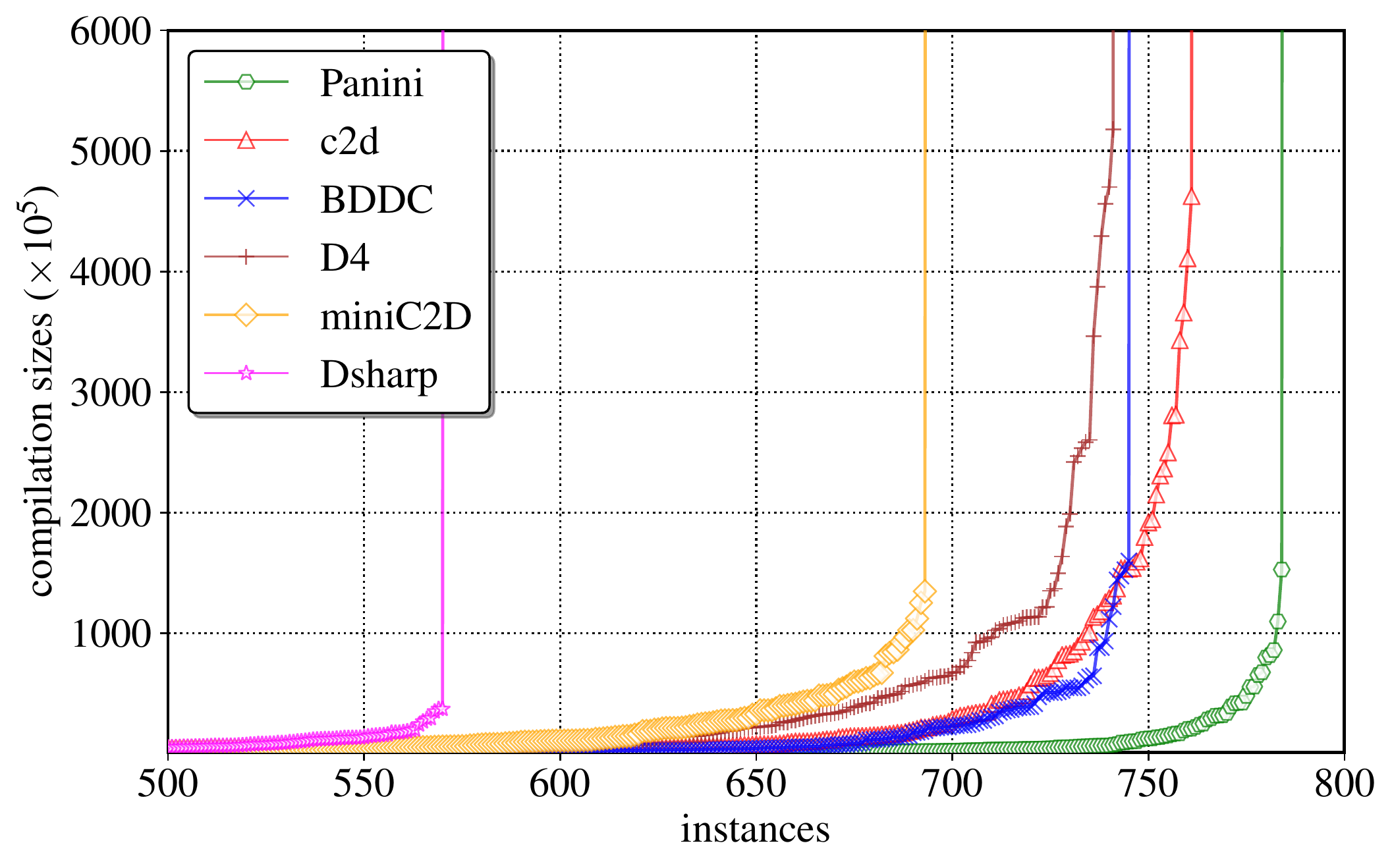}
		\end{minipage}
	}
	\caption{Cactus plots comparing the performance of different compilers. (Best viewed in color)}\label{fig:compilation}
\end{figure*}

\subsection{Model Counting}

We compared \ExactMC{} with state-of-the-art exact counters from each of the three paradigms: compilation-based, search-based or variable elimination-based.
Compilation-based counters include c2d and D4 based on \DecDNNF{}.  
For search-based counters, we compared with Ganak \cite{Ganak} and SharpSAT-TD \cite{SharpSAT-TD}, the winners of the unweighted tracks in model counting competitions 2020 and 2021 \footnote{See \url{https://mccompetition.org/past_iterations} for detailed information about model counting competition.}, respectively.
We remark that Ganak is a recent probabilistic exact model counter that implicitly combines \DecDNNF{} approach with probabilistic hashing to provide exact model count with a given confidence $1-\delta$ (we used the default $\delta=0.05$).
Note that probabilistic exact is a stronger notion than another related notion of probabilistic approximate counting~\cite{Chakraborty:etal:19}. Also, perhaps it is worth remarking that Ganak and SharpSAT-TD builds on and was shown to significantly improve upon the prior state of the art search-based counter, sharpSAT \cite{sharpSAT}. 
For variable elimination-based counters, we compared with ADDMC
\cite{ADDMC}.

We used the widely employed pre-processing tool B+E~\cite{BplusE} for all the instances, which was shown more powerful in model counting than pmc \cite{BplusE,Ganak}, but does not preserve the equivalence between input instance and pre-processed instance.
We remark that B+E can often simplify almost all of the literal equivalences in the original formula detected by i-BCP. We emphasize that the literal equivalences in \ExactMC{} is a  ``in-processing technology'', and since B+E is already used, the literal equivalences used in \ExactMC{} are basically the ones  appearing in the sub-formulas. 
Consistent with recent studies, we excluded the preprocessing time from the solving time for each tool as preprocessed instances were used on all solvers.
We emphasize that the usage of pre-processing favors other competing tools than \ExactMC{}, except SharpSAT-TD where a pre-processor similar to B+E has been integrated. To see the effect of B+E, Ganak, c2d, SharpSAT-TD, \dfour{}, ADDMC, and \ExactMC{} solved 173, 117, 2, 170, 283, and 54 less instances 
without the pre-processing, respectively.
Similarly, we employed the minfill heuristic for variable ordering in c2d. {\dfour}, Ganak, and SharpSAT-TD employ their own custom variable ordering heuristics, which were shown to improve their performance \cite{D4,Ganak}. 


\begin{table*}[tb]
	\centering
	\footnotesize
	\aboverulesep = 0.2ex
	\belowrulesep = 0.2ex
	\begin{tabularx}{\linewidth}{l>{\centering\arraybackslash}c*{5}{>{\centering\arraybackslash}X}}\toprule
		\multirow{2}*{domain (\#)} & \multirow{2}*{ADDMC} & \multicolumn{4}{c}{\DecDNNF{}} & \Ctwo{} \\\cmidrule{3-6}
		{} & {} & Ganak & c2d & SharpSAT-TD & \dfour{} & ExactMC \\\midrule
		Bayesian-Networks (201) & \textbf{191}  & 170 & 183 & 186 & 179 & 186 \\
		BlastedSMT (200) & 166 & 163 & 160 & 163 & 162 & \textbf{169} \\
		Circuit (56) & 45 & 49 & 50 & 50 & 49 & \textbf{51} \\
		Configuration (35) & 21 & \textbf{35} & \textbf{35} & 32 & 33 & 31 \\
		Inductive-Inference (41) & 3 & 18 & 19 & 18 & 18 & \textbf{22} \\
		Model-Checking (78) & 64 & 73 & \textbf{74} & 73 & 72 & \textbf{74} \\
		Planning (243) & 187 & 207 & 209 & 212 & 206 & \textbf{213} \\
		Program-Synthesis (220) & 52 & 96 & 76 & 77 & 90 & \textbf{108} \\
		QIF (40) & 24 & \textbf{32} & \textbf{32} & 28 & 26 & \textbf{32} \\\midrule[0.02em]
		Total (1114) & 753 & 843 & 838 & 839 & 835 & \textbf{886} \\\bottomrule
	\end{tabularx}
	\caption{Comparative counting performance between Ganak, c2d, SharpSAT-TD, \dfour{}, and \ExactMC{}, where each cell below tool refers to the number of solved instances}
\label{tab:expri:counting}
\end{table*}

\begin{table*}[tb]
	\scriptsize
	\centering
	\aboverulesep = 0.2ex
	\belowrulesep = 0.2ex
	\begin{tabularx}{\linewidth}{l*{6}{>{\centering\arraybackslash}X}}\toprule
		\multirow{2}*{domain/instance} & \multirow{2}*{Ganak} & \multirow{2}*{c2d} & \multirow{2}*{SharpSAT-} & \multirow{2}*{D4} & \multicolumn{2}{c}{ExactMC} \\ \cmidrule{6-7}
		{} & {} & {} & {TD} & {} & time & \#kers \\\midrule
		Bayesian-Networks/Grids\_11  & 1239.5 & -- & 395.2 & -- & 915.9 & 0 \\
		BlastedSMT/blasted\_case138 & -- & -- & -- & -- & 0.9 & 24 \\
		Circuit/2bitadd\_11 & -- & -- & -- & -- & 2724.1 & 11580 \\
		Configuration/C168\_FW & 338.6 & 14.0 & 133.4 & 68.3 & -- & -- \\
		Inductive-Inference/ii32d2 & -- & -- & 708.4 & -- & 604.2 & 559 \\
		Model-Checking/bmc-galileo-8 & 1.3 & 2145.9 & -- & -- & 1.8 & 33 \\
		Planning/logistics.c & 214.4 & 536.7 & 182.3 & 173.5 & 29.1 & 7366 \\
		Program-Synthesis/sygus\_09A-1 & -- & -- & -- & -- & 161.0 & 20403 \\
		QIF/min-2s & 61.3 & 0.3 & 131.7 & 125.4 & 10.1 & 8  \\
		\bottomrule
	\end{tabularx}
	\caption{Counting statistics on selected instances using Decision-DNNF-based and \Ctwo{}-based counters, where \textquoted{--} denotes timeout or out of memory, \textquoted{\#kers} denotes the total number of kernelizations, and the other columns are about solving time in seconds}
\label{tab:expri:kernel}
\end{table*}

\begin{figure}[tb]
	\centering
	\includegraphics[width = 0.5\linewidth]{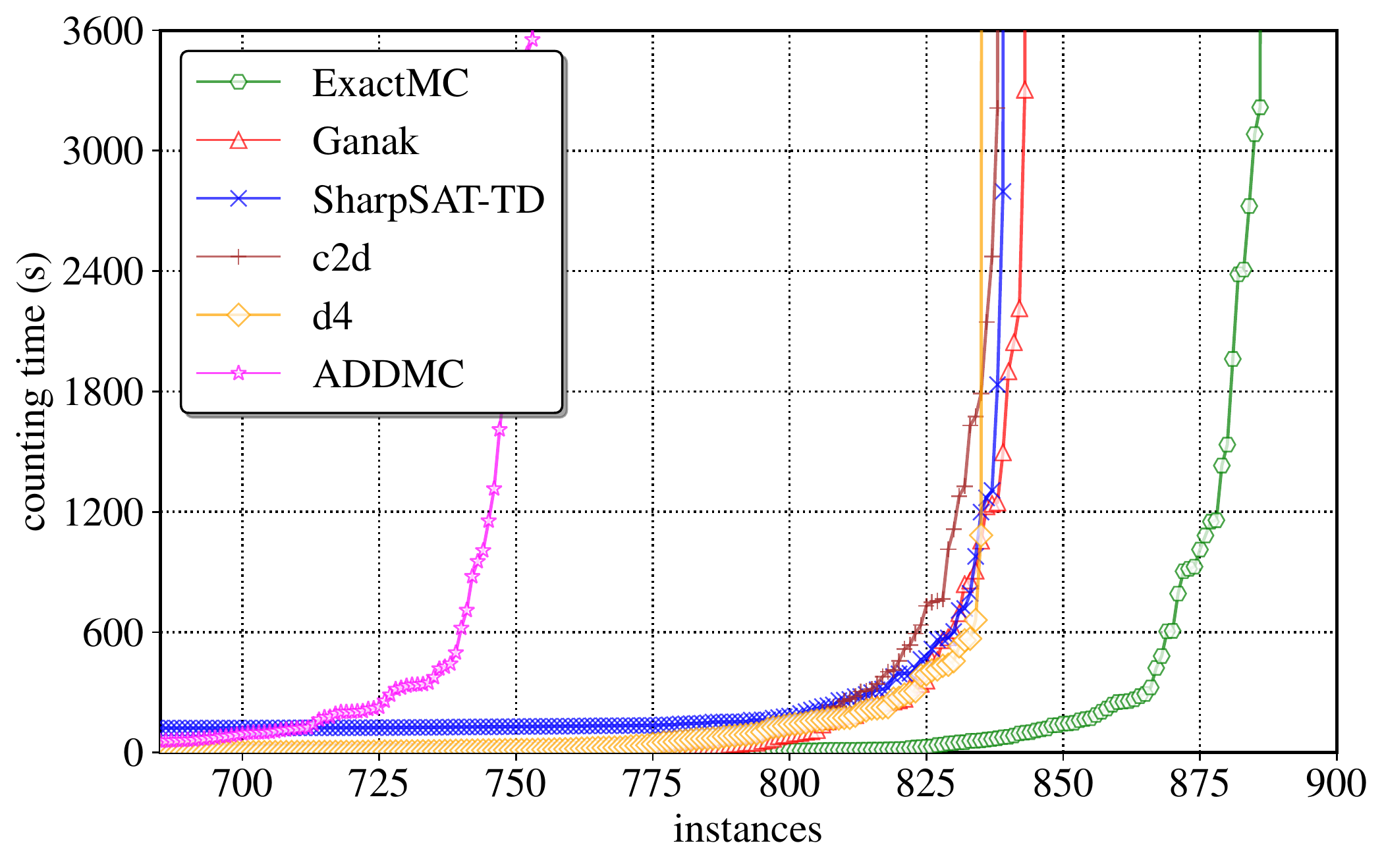}
	\caption{Cactus plot comparing the solving time of different counters. (Best viewed in color)}\label{fig:counting:time}
\end{figure}

Table \ref{tab:expri:counting} shows the performance of the six counters. Overall, \ExactMC{} solved 133, 43, 48, 47, and 51 more instances than ADDMC, Ganak, c2d, SharpSAT-TD, and \dfour{}, respectively.  
Upon closer inspection of the performance of various tools across different domains, we observe that \ExactMC{} performed the best on seven out of nine domains.
Figure \ref{fig:counting:time} shows the cactus plot for runtime for all the six tools.
The $x$-axis gives the number of benchmarks; and
the $y$-axis is running time, i.e., 
a point $(x, y)$ in Figure~\ref{fig:counting:time} shows that $x$ benchmarks took less than or equal to $y$ seconds to solving. 
The results show that \ExactMC{} can improve the state-of-the-art model counting
across all three paradigms.

We remark that all of Ganak, c2d, SharpSAT-TD, and D4 perform searches with respect to \DecDNNF{}.
In order to show the effect of kernelization, we compared \ExactMC{} with the virtual best solver of c2d, D4, Ganak, and SharpSAT-TD (VBS-DecDNNF). We found that even in such an extreme case, \ExactMC{} solved one more instance than VBS-DecDNNF. 

We present the effect of kernelization on some selected instances and solving
times in Table \ref{tab:expri:kernel}. 
The experimental results show that for some instances (e.g., blasted\_case138), even a small number of kernelizations are very useful to accelerate solving. Furthermore, it is worth noticing that we are able to perform a large number of kernelizations in the benchmarks, 
showing that substantial literal equivalence can occur in sub-formulas despite the use of pre-processing, e.g. sygus\_09A-1 (Program-Synthesis). 
We also conducted experiments where kernelization was disabled in \ExactMC{} (without lines 4--12 in Algorithm \ref{alg:ExactMC}). We found that the resulting counter solved 17 less instances than the original version of \ExactMC{}, and the average PAR-2 score increased to 1603 from 1505.\footnote{The average PAR-2 scoring scheme gives a penalized average runtime, assigning a runtime of two times the time limit (instead of a ``unsolved'' status) for each benchmark not solved by a tool.}
\subsection{Uniform Sampling}

To the best of our knowledge, SPUR and KUS are the only two tools that can perform sampling on {\CNF} formulas with theoretical guarantees of uniformity.
SPUR was built on top of sharpSAT, while KUS employs {\dfour} to perform \DecDNNF{} compilation.
Consistent with the previous studies, we compare {\ExactSamp} with SPUR and KUS on the generation of 1000 samples for each instance.
As with the compilation experiments, we use pmc to pre-process the instances
as it preserves equivalence.
Table \ref{tab:expri:sampling} shows the performance of SPUR, KUS, and {\ExactSamp}. 
Overall, {\ExactSamp} solved 132 and 186 more instances than SPUR and KUS, respectively, and performed the best on all the (nine) domains.
We remark that {\ExactSamp} solved 157 and 201 more instances than SPUR and KUS, respectively, without the usage of pmc.
Figure \ref{fig:sampling:time} shows the cactus plot for runtime for all three samplers.
The results also demonstrate the significant improvement of {\ExactSamp} compared with SPUR and KUS.

\begin{table*}[tb]
	\centering
	\footnotesize
	\aboverulesep = 0.2ex
	\belowrulesep = 0.2ex
	\begin{tabularx}{\linewidth}{l*{3}{>{\centering\arraybackslash}X}}\toprule
		\multirow{2}*{domain (\#)} & \multirow{2}*{SPUR} & \DecDNNF{} & \Ctwo{} \\
		{} & {} & KUS & {\ExactSamp} \\\midrule
		Bayesian-Networks (201) & 132 & 109 & \textbf{161} \\
		BlastedSMT (200) & 147 & 137 & \textbf{161} \\
		Circuit (56) & 32 & 30 & \textbf{41} \\
		Configuration (35) & 28 & 23 & \textbf{32} \\
		Inductive-Inference (41) & 16 & 15 & \textbf{18} \\
		Model-Checking (78) & 54 & 63 & \textbf{76} \\
		Planning (243) & 159 & 152 & \textbf{192} \\
		Program-Synthesis (221) & 73 & 59 & \textbf{89} \\
		QIF (39) & 7 & 6 & \textbf{12} \\\midrule[0.02em]
		Total (1114) & 648 & 594 & \textbf{780} \\\bottomrule
	\end{tabularx}
	\caption{Comparative sampling performance between SPUR, KUS, and \ExactSamp{}, where each cell below tool refers to the number of solved instances}
	\label{tab:expri:sampling}
\end{table*}

\begin{figure}[tb]
	\centering
	\includegraphics[width = 0.5\linewidth]{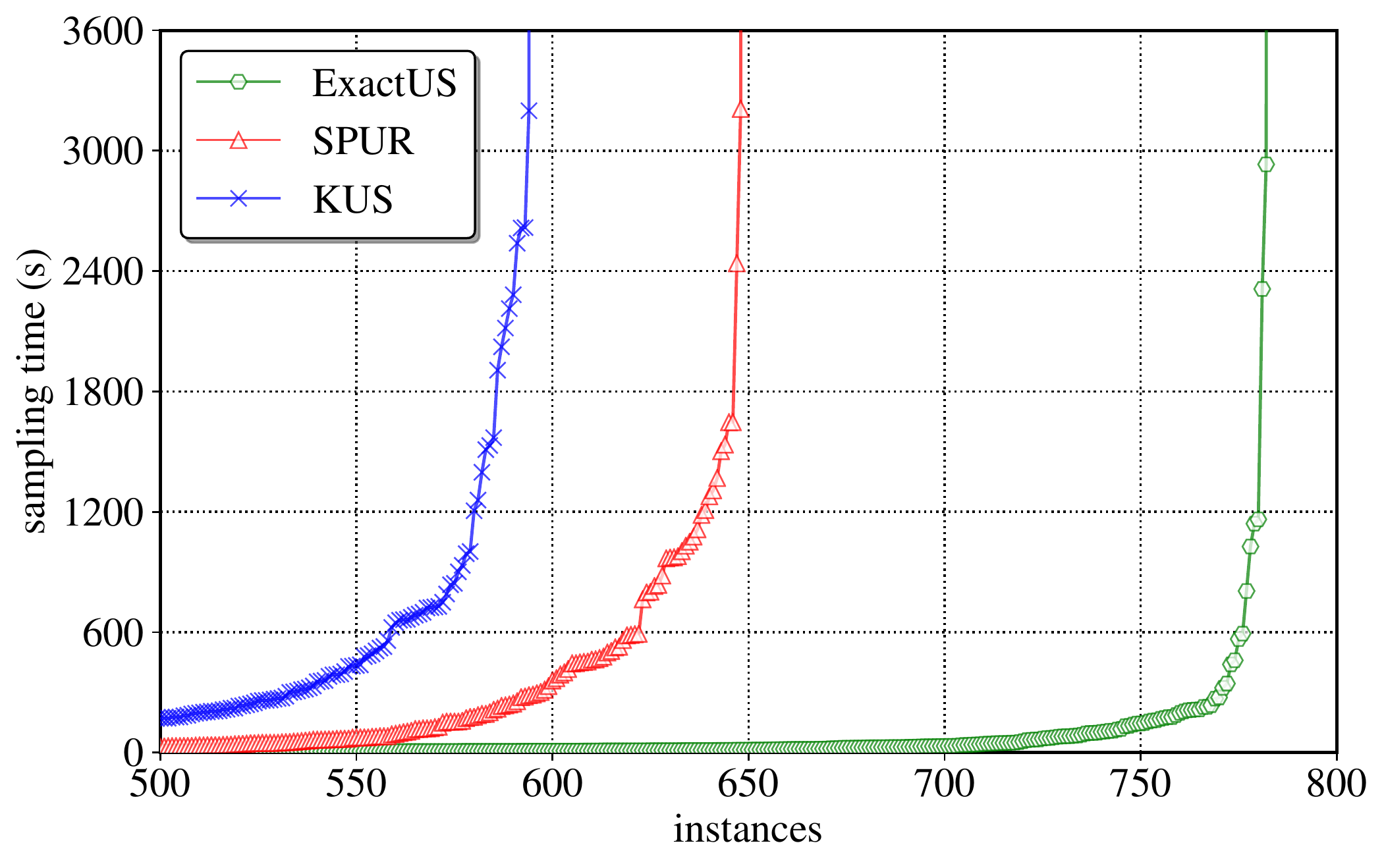}
	\caption{Cactus plot comparing the solving time of different samplers. (Best viewed in color)}\label{fig:sampling:time}
\end{figure}

\section{Discussion on Tractability of \Ctwo{}}
\label{sec:tract}

We highlight that our focus in this paper is primarily on
improving the scalability of model counters and uniform samplers. 
However, encouraged by the significant performance improvement by {\ExactMC}
over existing solvers as shown in our experimental results, 
we investigate further into the underlying language, {\Ctwo}. 
To this end, we study {\Ctwo} from a knowledge compilation perspective characterize the tractability of {\Ctwo}. 
We refer the reader to Darwiche and Marquis's seminal work ~\cite{Darwiche:Marquis:02} for definitions of different standard operations in the literature. We focus on the five queries: implicant check, model counting, consistency check, validity check, and model enumeration. 

We first show that  \Ctwo{} supports tractable implicant check:

\begin{proposition}\label{prop:IM}
	Given a consistent term $T$ and a CCDD node $u$, we use $\mathit{IM}(T, u)$ to denote whether $T \models \vartheta(u)$. Then $\mathit{IM}(T, u)$ can be recursively performed in linear time:
	\begin{equation*}
	\mathit{IM}(T, u) = \begin{cases}
	\mathit{false} & sym(u) = \bot \\
	\mathit{true} & sym(u) = \top \\
	\mathit{IM}(T, lo(u)) & \text{$\NOT sym(u) \in T$} \\
	\mathit{IM}(T, hi(u)) & \text{$sym(u) \in T$} \\
	\bigwedge_{v \in Ch(u)}\mathit{IM}(T, v) & {\text{otherwise}}
	\end{cases}
	\end{equation*}
\end{proposition}
\begin{proof}
	The constant, and decomposed  and kernelized conjunction cases are obvious, and thus we focus on the decision case. Note that a literal equivalence is a special decision node. For the case where $\neg sym(u) \in T$, each model of $T$ is not a model of $sym(u) \land \vartheta(hi(u))$, and thus $T \models \vartheta(u)$ iff $T \models \vartheta(lo(u))$. The case where $sym(u) \in T$ is similar. Otherwise, $T \models \vartheta(u)$ iff $\neg sym(u) \land T \models \neg sym(u) \land \vartheta(lo(u))$ and $sym(u) \land T \models sym(u) \land \vartheta(hi(u))$ iff $T \models \vartheta(lo(u))$ and $T \models \vartheta(hi(u))$. 
\end{proof}

 Since {\Ctwo} supports model counting in linear time, we obtain that {\Ctwo} supports consistency check, validity check, and model enumeration in polynomial time. 
\begin{theorem}\label{thm:succinct}
\Ctwo{} supports model counting, consistency check, validity check, and implicant check in time polynomial in the DAG size, and supports model enumeration in time polynomial in both the DAG size and model count.
\end{theorem}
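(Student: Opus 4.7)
The plan is to dispatch the five queries by reusing results already established in the paper, primarily Propositions \ref{prop:counting} and \ref{prop:IM}, and then giving a short constructive argument for model enumeration. Model counting is exactly the content of Proposition \ref{prop:counting}, which yields an $O(|\D_u|)$ bottom-up evaluation of $CT(u)$ once we fix the variable universe $X = \mathit{Vars}(u)$. Implicant check is the content of Proposition \ref{prop:IM}, again in linear time. So these two cases require no further work.

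For consistency check (CO) and validity check (VA), I would reduce both to model counting. First compute $CT(u)$ over $X = \mathit{Vars}(u)$ via Proposition \ref{prop:counting}. Then $\vartheta(u)$ is consistent iff $CT(u) > 0$, and $\vartheta(u)$ is valid iff $CT(u) = 2^{|X|}$. Both comparisons are done on an integer of at most $|X|+1$ bits, so the total cost is still polynomial in $|\D_u|$. (Alternatively, one could give a direct bottom-up propagation of a Boolean ``consistent'' flag through the DAG, exploiting the fact that a decomposed conjunction is consistent iff each child is, a kernelized conjunction is consistent iff its core child is, a decision node is consistent iff at least one child is, and the leaves are trivial; but the counting-based reduction is cleanest.)

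For model enumeration (ME), the plan is a standard top-down traversal that mirrors the sampling algorithm of Section \ref{sec:CT-US} but is exhaustive rather than randomized. I would define a recursive procedure $\mathrm{Enum}(u)$ that returns the set of partial assignments over $\mathit{Vars}(u)$ satisfying $\vartheta(u)$, with the following cases: for $\Angle{\top}$ return $\{\emptyset\}$ and for $\Angle{\bot}$ return $\emptyset$; for a decomposed conjunction return the Cartesian product of the children's enumerations; for a kernelized conjunction $u$, enumerate the models of $ch_{core}(u)$ and, for each such partial assignment, propagate values through the literal equivalences in $Ch_{rem}(u)$ (assigning the equivalence variable freely in case its partner is unassigned, then completing by the equivalence); for a decision node on $x$, union $\{x=\mathit{false}\} \times \mathrm{Enum}(lo(u))$ with $\{x=\mathit{true}\} \times \mathrm{Enum}(hi(u))$. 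Finally, lift each returned partial assignment to a total assignment over $X$ by enumerating the free variables. Correctness follows directly from Definitions \ref{def:CDD} and \ref{def:RCDD} and from Proposition \ref{prop:kernelizedcount}. For the runtime, each node is visited a number of times bounded by the number of models routed through it, and each visit does $O(|\D_u|\cdot|X|)$ bookkeeping, giving total time polynomial in $|\D_u|$ and in $|\satisfying{\vartheta(u)}|$, as required.

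The main obstacle I anticipate is the model enumeration step for kernelized conjunctions: the subtlety is that the equivalence children share variables with each other (through their chains of equivalences) and may share variables with the core, so naively taking a Cartesian product would over- or under-count. The right way to handle this is to fix an order on $Ch_{rem}(u)$ and, when processing each equivalence $\Angle{x \EQU l}$, check whether $var(l)$ is already assigned by the current partial assignment (from the core or from a previously processed equivalence) and force $x$ accordingly, or otherwise branch on the free side; the kernelized-conjunction definition guarantees $var(l) \notin \mathit{Vars}(ch_{core}(u))$, which together with the prime-literal-equivalence structure of $E_v$ ensures this procedure is well-defined and produces each model of $\vartheta(u)$ exactly once. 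Once this bookkeeping is in place, the complexity bound follows routinely.
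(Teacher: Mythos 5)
Your proposal is correct and follows essentially the same route as the paper, which derives the theorem directly from Proposition~\ref{prop:counting} (model counting), Proposition~\ref{prop:IM} (implicant check), and the observation that tractable counting immediately yields consistency and validity checks. The only place you go beyond the paper is model enumeration, where the paper simply asserts the consequence while you spell out an explicit output-polynomial traversal (including the correct handling of the shared-representative subtlety at kernelized conjunctions); this is a welcome elaboration rather than a departure.
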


According to the notation in the knowledge compilation map \cite{Darwiche:Marquis:02}, we know that \Ctwo{} satisfies \CT{}, \CO{}, \VA{}, {\IM}, and \ME{}, respectively. We mention that if we restrict the number of $\AND_k$-nodes in each path from the root to a leaf, to be a constant $t$, we can obtain a subset of \Ctwo{}. This subset is still a superset of \DecDNNF{}, and supports the same tractable operations as \DecDNNF{}.
We remark that another representation in the knowledge compilation literature called \ensuremath{\mathsf{EADT}} \cite{Koriche:etal:13} uses a generalization of literal equivalence; however, \ensuremath{\mathsf{EADT}} is a tree-structured representation and therefore is not a generalization of \DecDNNF{}, which is a DAG-based representation.

\section{Conclusion}\label{sec:conclusion}

This paper proposed the notion of kernelization to capture literal equivalence in knowledge compilation. 
Combining kernelization, decomposition and ordered decision, this paper identified the new language \Ctwo{}.
\Ctwo{} supports two key queries, model counting and uniform sampling in polynomial time. 
We designed tractable algorithms for model counting and uniform sampling on \Ctwo{}.
To facilitate the usage of \Ctwo{} in practice, we developed the prototype compiler \Panini{} to compile \CNF{} formulas into \Ctwo{}. 
Experimental results show that our compilation times are better with smaller representations than state-of-art \DecDNNF{}, \LSDD{}, and \LOBDDC{}{} compilers.
For model counting and uniform sampling, our techniques also significantly outperform the state-of-the-art tools. 
Since kernelization is orthogonal to other notions such as determinism and decomposability, we expect kernelization  will help the knowledge compilation community to identify more interesting languages.

\section*{Acknowledgments}
We are grateful to the anonymous reviewers for their constructive feedback.
We thank Mate Soos and Arijit Shaw for their help. 
This work was supported in part by the National Research Foundation Singapore under its NRF Fellowship Programme [NRF-NRFFAI1-2019-0004] and the AI Singapore Programme [AISG-RP-2018-005], NUS ODPRT9 Grant [R-252-000-685-13], Jilin Province Natural Science Foundation [20190103005JH] and National Natural Science Foundation of China [61806050]. 
The computational resources were provided by
the National Supercomputing Centre, Singapore (\url{https://www.nscc.sg}).





\bibliographystyle{elsarticle-num-names} 
\bibliography{papers,software,books,sigproc}





\end{document}